\documentclass[runningheads]{llncs}

\usepackage{eccv}

\usepackage{eccvabbrv}

\usepackage{graphicx}
\usepackage{booktabs}

\usepackage[accsupp]{axessibility}  

\usepackage{hyperref}

\usepackage{orcidlink}

\usepackage{latexsym}
\usepackage{amssymb}
\usepackage{amsmath}
\usepackage{enumitem}
\usepackage{color}
\usepackage{algorithm}
\usepackage{algpseudocode}
\usepackage{algorithmicx}
\usepackage{xcolor}
\usepackage{multirow}
\usepackage{marginnote}
\usepackage{mathtools}
\usepackage{tikz}
\usetikzlibrary{shapes.geometric, arrows, positioning}
\usepackage{pgfplots}
\pgfplotsset{compat=1.18}
\usepgfplotslibrary{fillbetween} 
\usepackage{bm}  
\usepackage{adjustbox}
\usepackage{makecell}

\newcommand{\predictedbox}{b}

\newcommand{\set}[1]{\left\{#1\right\}}
\DeclareMathOperator*{\argmax}{arg\,max}

\newcommand{\ioucert}{{\sc \sf IoUCert}}

\begin{document}

\title{IoUCert: Robustness Verification for Anchor-based Object
Detectors}


\author{Benedikt Brückner\inst{*,1}\orcidlink{0000-0003-0699-1688} \and
Alejandro J.~Mercado\inst{*,\dagger,2}\orcidlink{0009-0009-3905-4579} \and
Yanghao Zhang\inst{1}\orcidlink{0000-0002-8499-0974} \and
Panagiotis Kouvaros\inst{1}\orcidlink{0000-0003-2697-0710} \and
Alessio Lomuscio\inst{1,2}\orcidlink{0000-0003-3420-723X}
}

\authorrunning{B.~Brückner et al.}

\institute{Safe Intelligence\\
\email{\{benedikt,yanghao,panagiotis,alessio\}@safeintelligence.ai}
\and
Imperial College London\\
\email{a.mercado24@imperial.ac.uk}
}

\maketitle
\begingroup
\renewcommand\thefootnote{}
\footnotetext{$^*$Equal contribution.}
\footnotetext{$^\dagger$Work done during an internship at Safe Intelligence.}
\endgroup

\begin{abstract}
While formal robustness verification has seen significant success in image classification, scaling these guarantees to object detection remains notoriously difficult due to complex non-linear coordinate transformations and Intersection-over-Union (IoU) metrics. As a fundamental step towards verifying complete detection pipelines, we introduce \ioucert, a novel formal verification framework designed specifically to overcome these core mathematical bottlenecks. By isolating the object localisation task in single-object settings, we propose a coordinate transformation that circumvents precision-degrading relaxations of non-linear box prediction functions. This approach allows us to optimise bounds directly with respect to anchor box offsets, enabling a novel Interval Bound Propagation method that derives optimal IoU bounds. We demonstrate that \ioucert{} enables, for the first time, the robustness verification of foundational, anchor-based architectures including tractable variants of SSD, YOLOv2, and YOLOv3 against various input perturbations, providing a rigorous theoretical basis for future end-to-end detector verification.
\keywords{Neural Network Verification \and Object Detection \and Adversarial Robustness}
\end{abstract}

\section{Introduction}
\label{sec:introduction}

Neural networks are increasingly being deployed in safety-critical domains
such as autonomous vehicles~\cite{Cao+19} and medical
diagnostics~\cite{Litjens+17}. Even when these models produce correct
predictions for a given input, they can make incorrect predictions on
perceptually equivalent variations of that
input~\cite{GoodfellowShlensSzegedy15,AmirkhaniKarimiBanitalebiDehkordi23,Zhang+24,ZhangWangRuan21a,Wang+25a}.
Formal verification approaches address this by assessing the robustness of
models with respect to various input perturbations such as white
noise~\cite{Katz+19,Singh+19a,Wang+21b,KouvarosLomuscio21},
photometric~\cite{Kouvaros+21,Henriksen+21},
geometric~\cite{Balunovic+19a,Batten+24} and
convolutional~\cite{MziouSallamiAdjed22,BruecknerLomuscio25} perturbations.

While significant progress in the area has been made, most methods target
image
classifiers~\cite{FerlezKhedrShoukry22,Wu+24,Duong+23,Bak21,Lopez+23,LemesleLehmannGall24,Demarchi+24,Althoff15,ZhangKouvarosLomuscio25}.
As a result, the methods cannot analyse complex architectures widely used
in computer vision applications that require robustness validation. A
notable example of this is the class of object detection (OD)
models~\cite{Redmon+16a,Liu+16,Zou+23} which may exhibit vulnerabilities
such as that illustrated in Figure~\ref{fig:od_examples}. The technical
challenge is that OD models include non-linear components such as
non-maximum suppression, additional logic components such as
Intersection-over-Union (IoU) calculations, and non-linear transformations
to convert the raw model outputs into bounding box predictions. Existing
robustness verifiers either do not support these or provide loose
approximations for
them~\cite{Cohen+24,Raviv+24,ChowdhuryKhandelwalDSouza25,NiralaSarkar25}.
Most work on OD verification employs regression models predicting
four box coordinates which lack the typical backbone, neck and head
structure of object detectors, do not consider heads operating at multiple
grid scales, and use shallow
backbones~\cite{Cohen+24,Raviv+24,ChowdhuryKhandelwalDSouza25,NiralaSarkar25}.
They therefore lack the capabilities to analyse modern OD models.

\begin{figure}[tbp]
\centering
\includegraphics[width=0.2\textwidth]{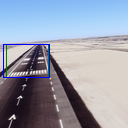}
\hspace{0.03\textwidth} 
\includegraphics[width=0.2\textwidth]{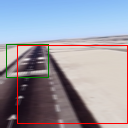}
\caption{A counterexample from robustness verification of a YOLOv3 model on
a runway detection task. Left: original image; right: perturbed image.
Green boxes: ground truth, blue box: original model prediction (IoU
$=0.89$), red box: prediction on the perturbed image (IoU $=0.11$).}
\label{fig:od_examples}
\end{figure}

In this paper, we overcome these limitations by introducing \ioucert{},
a method for the verification of OD models that is provably tighter than
previous approaches, supports foundational anchor-based architectures such as SSD
and YOLO, and scales beyond simplified toy models.
Specifically, we make the following contributions:
\begin{itemize}
\item  We improve existing Interval Bound Propagation (IBP) methods by 
deriving optimal IoU bounds for anchor-based object detection models.
\item We introduce an architecture-aware verification framework for anchor-based object detectors. By using a novel coordinate transformation, we bypass the relaxation of complex non-linearities (such as class logit sigmoids), effectively avoiding the computational bottlenecks of standard verifiers. This allows us to scale efficiently and enables the first formal verification of models like SSD and YOLOv3 in single-object settings.
\item We derive optimal linear relaxations for LeakyReLU activations in
YOLOv3 models to minimise relaxation errors. 
\item We integrate \ioucert{} into the {\sf
Venus} verifier to analyse the robustness of SSD,
YOLOv2 and YOLOv3 models on datasets of varying complexity, including Pascal VOC~\cite{Everingham+10}, COCO~\cite{Lin+14} and a runway detection task on LARD~\cite{Ducoffe+23}.
\end{itemize}

Bridging formal verification and object detection requires managing immense computational complexity: complete verification provides guarantees over infinite continuous perturbation spaces, far more demanding than empirical testing. To remain tractable for realistic architectures, we focus on the single-object setting, verifying the core classification-regression mechanisms while leaving the combinatorial complexity of multi-object competition and Non-Maximum Suppression (NMS) for future work.

The rest of the paper is organised as follows: After discussing
related work below, we discuss background material on OD models and
neural network verification in Section~\ref{sec:background} and
present \ioucert{} in Section~\ref{sec:method}, which we evaluate in
Section~\ref{sec:evaluation}. We conclude in
Section~\ref{sec:conclusion}.

\section{Related Work}
\label{sec:related_work}

Neural network verification encompasses complete methods relying on Satisfiability Modulo
Theories~\cite{PulinaTacchella12,Ehlers17,Katz+17,Katz+19} or Mixed-Integer Linear Programming solvers~\cite{LomuscioMaganti17,TjengXiaoTedrake19,Anderson+20,Singh+19b,Botoeva+20,KouvarosLomuscio21}, and incomplete methods that use relaxations like Semidefinite Programming~\cite{RaghunathanSteinhardtLiang18,FazlyabMorariPappas20,LanBruecknerLomuscio23,ChiuZhang23} or efficient bound propagation~\cite{Gowal+19,Wang+18b,Singh+18,Zhang+18,Wang+21b,Zhang+22,Kouvaros+25}. State-of-the-art approaches often combine GPU-accelerated incomplete bound propagation with Branch-and-Bound heuristics to achieve completeness and scalability~\cite{Wang+18a,DePalma+21,HenriksenLomuscio21,Brix+24,Zhou+24,Zhou+25}.

While verification has seen significant progress, most verifiers primarily target image classifiers~\cite{Brix+23b}. Recent efforts in object detection (OD) verification struggle to scale beyond highly simplified setups. For instance, \cite{Cohen+24} bounds the Intersection-over-Union (IoU) metric via Interval Bound Propagation (IBP) on predicted corner coordinates, while \cite{Raviv+24} encodes IoU as a network layer but struggles with loose bounds on operations like \textit{max}, \textit{min}, and division. Both methods demonstrate formal verification on simple regression toy models outputting four corner coordinates for a single object, and these models achieve low accuracies. The probabilistic verifier by \cite{Liu+26} scales to larger architectures, but suffers from unsound robustness certificates, even for small models. Other works based on ImageStars~\cite{ChowdhuryKhandelwalDSouza25} or branch-free IBP~\cite{NiralaSarkar25} are similarly evaluated on toy models and lack support for the multi-scale heads, non-linear coordinate transformations, and anchor-based structures of real-world detectors like SSD~\cite{Liu+16} and YOLO~\cite{Redmon+16a}.

In contrast, \ioucert{} is a sound verifier that provides an object detection-aware framework which explicitly exploits the specific structure of anchor-based detectors to efficiently scale verification. Unlike previous approaches that naively apply image verifiers to OD models which often incur unnecessary relaxations that lead to timeouts, we introduce a coordinate transformation to directly operate on inferred offset bounds. This enables optimal bounding for realistic anchor-based detectors that achieve high accuracies. Combined with tight symbolic bounding and optimal IoU relaxations, our method scales to architectures of practical relevance. For the first time, we demonstrate robustness verification on SSD~\cite{Liu+16}, YOLOv2~\cite{RedmonFarhadi17}, and YOLOv3~\cite{RedmonFarhadi18} models. While newer YOLO variants exist, they employ components that remain highly challenging for current verifiers, such as attention layers~\cite{BochkovskiyWangLiao20}.

\section{Background}
\label{sec:background}
We formally define the single-object detection task, describe typical anchor-based architectures, and outline the neural network verification problem. Extending this framework to multiple objects is left for future work due to the added complexity.

\subsection{Object detection}
\label{ssec:object_detection}
Let $I \in \mathbb{R}^{C \times H \times W}$ be an image that is annotated with a ground truth box $g = (z_0, z_1, z_2, z_3, g_c)$, where $(z_0, z_1)$ and $(z_2, z_3)$ are its top-left and bottom-right corners, and $g_c$ is its class. For a single-object task the expected output is exactly one bounding box matching this ground truth, but regressing a single set of coordinates performs poorly. Instead, a modern object detector $\mathbf{O}$ predicts a large set of candidate detections $\mathbf{O}(I) = \{ \predictedbox^1, \dots, \predictedbox^k \}$ and relies on post-processing to isolate the final prediction, which makes the naive application of robustness verifiers prohibitively difficult.

Each predicted box $\predictedbox = (z_0, z_1, z_2, z_3, \mathbf{c}, s)$ contains corner coordinates, class probabilities $\mathbf{c} \in \mathbb{R}^{n_c}$, and a confidence score $s \in [0, 1]$. Let the area of box $b$ be $\mathbf{a}(b) = (b_3 - b_1)(b_2 - b_0)$ and the intersection of two boxes $b, b'$ be $\mathbf{i}(b, b') = (\max(b_0, b'_0), \max(b_1, b'_1), \min(b_2, b'_2), \min(b_3, b'_3))$. The Intersection-over-Union is $\mathrm{IoU}(b, b') = \mathbf{a}(\mathbf{i}(b, b')) / (\mathbf{a}(b) + \mathbf{a}(b') - \mathbf{a}(\mathbf{i}(b, b')))$.

An OD model is \emph{correct} with respect to ground truth $g$ and thresholds $\tau_{\mathrm{iou}}, \tau_{\mathrm{class}}$ if: (1) it outputs exactly one box $b$ after post-processing, (2) the predicted class $\argmax_{i} \mathbf{c}_i$ matches $g_c$ and its class score exceeds $\tau_{\mathrm{class}}$, and (3) $\mathrm{IoU}(b, g) \geq \tau_{\mathrm{iou}}$.

Modern anchor-based OD models (\eg, SSD, YOLO) predict offsets for a fixed set of prior \emph{anchor boxes} $\{p^1, \dots, p^n\}$ via a three-step pipeline:

\noindent\textbf{Step 1:~Offset Prediction.} The model predicts $\mathcal{A} = \{a^1, \dots, a^n\}$, where $a^i = (o^i, l^i)$ contains coordinate offsets $o^i$ and logits $l^i$ for class and confidence scores.

\noindent\textbf{Step 2:~Box Construction.} Offsets $o^i$ and anchors $p^i$ are combined via a model-specific function $\phi(o^i, p^i) = (c_x, c_y, w, h)$ into centre-format coordinates (see Appendix~\ref{sec:h_phi_functions}). This is mapped to the corner-format $(z_0, z_1, z_2, z_3)$ using $\psi(\phi(o^i, p^i)) = (c_x - \frac{w}{2}, c_y - \frac{h}{2}, c_x + \frac{w}{2}, c_y + \frac{h}{2})$. The inverse, used later in our verification procedure, is $\psi^{-1}(z_0, z_1, z_2, z_3) = (\frac{z_0 + z_2}{2}, \frac{z_1 + z_3}{2}, z_2 - z_0, z_3 - z_1)$. Each box is assigned a class and confidence score from the corresponding logits $l^i$.

\noindent\textbf{Step 3:~Post-processing.} Non-maximum suppression (NMS) filters overlapping or low-confidence boxes. For single object detection, we assume this selects the highest-confidence bounding box above a threshold (this is without loss of generality, see Appendix~\ref{sec:proof_single_highest_box}). We keep this threshold even though we deal with single-object detection, since it allows a detector to abstain when all candidates score below it. Verifying correctness therefore requires certifying both the selection of the top box and the satisfaction of the threshold.

\subsection{Neural Network Verification}
\label{ssec:neural_network_verification}
Verification aims to certify a network's robustness. A network $N$ is \emph{robust} to input constraints $\zeta_x$ (\eg, $\ell_\infty$ bounds, brightness, or blurring~\cite{Kouvaros+21,BruecknerLomuscio25}) if $N(x)$ satisfies output constraints $\zeta_y$ for all $x \in \zeta_x$. Here, $\zeta_y$ encodes OD correctness as defined above.

Interval Bound Propagation (IBP)~\cite{Gowal+19} is a fast, incomplete method that propagates concrete intervals, but it often yields loose bounds as it cannot capture variable dependencies~\cite{Wang+18b}. Symbolic Interval Propagation (SIP) mitigates this by propagating symbolic bounding equations. The CROWN\slash DeepPoly\slash back-substitution method~\cite{Zhang+18,Singh+19a,HenriksenLomuscio21} propagates symbolically from the layer of interest back to the input to fully capture dependencies and obtain tighter bounds. When bounds remain too loose, branch-and-bound partitions the problem space to reduce relaxation errors. This is done via \textit{input splitting} for low input dimensions~\cite{Wang+18b,Botoeva+20} or \textit{neuron splitting} for high-dimensional problems~\cite{Botoeva+20,Ferrari+22}.

Verifying OD models with standard SIP is challenging because verifiers typically target ReLU networks (YOLOv3 uses LeakyReLU) and require linear relaxations for all non-linear components (\eg, box construction). In Section~\ref{sec:method}, we overcome these limitations by proposing a coordinate transformation to bypass certain non-linearities, derive optimal IoU bounds, and define optimal LeakyReLU relaxations.

\section{Method}
\label{sec:method}
Verifying object detectors is challenging: modern OD models combine large
architectures with non-linear functions mapping offsets to box coordinates,
and existing verifiers fail to scale to SSD or YOLO due to loose bounds and
missing support for anchor boxes, multiple prediction heads, and confidence
scores. To the best of our knowledge, \ioucert{} is the first to support
such coordinate transformations, enabling IoU bounding for detectors with
non-linear box predictions, for which we further derive the provably tightest
IoU bounds. Figure~\ref{fig:od_pipeline} presents an overview of our verification
pipeline.
\tikzstyle{block} = [rectangle, rounded corners, minimum width=2.5cm, minimum height=0.8cm, text centered, align=center, draw=black, fill=blue!15]
\tikzstyle{nnblock} = [rectangle, rounded corners, minimum width=2.5cm, minimum height=0.8cm, text centered, align=center, draw=black, fill=green!20]
\tikzstyle{odblock} = [rectangle, rounded corners, minimum width=2.5cm, minimum height=0.8cm, text centered, align=center, draw=black, fill=yellow!20]
\tikzstyle{finalblock} = [rectangle, rounded corners, minimum width=2.5cm, minimum height=0.8cm, text centered, align=center, draw=black, fill=red!15]
\tikzstyle{module} = [rectangle, dashed, draw=black]
\tikzstyle{arrow} = [thick,->,>=stealth]
\begin{figure}[t]
\centering

\resizebox{\columnwidth}{!}{
\begin{tikzpicture}[node distance=.5cm]

\node (constraints) [block] {Input\\Constraints \\ $( \zeta_x)$};
\node (splitting) [block, below of=constraints, yshift=-0.8cm] {Image};

\node (nn) [nnblock, right of=constraints, xshift=3cm, yshift=-0.6cm] {Neural\\Network\\(FFNN)};
\node (offsetbox) [module, minimum width=3cm, minimum height=2cm, right of=constraints, xshift=3cm, yshift=-0.6cm] {};

\node (odbox) [module, minimum width=3cm, minimum height=3cm, right of=nn, xshift=3cm] {};

\node (candidates) [odblock, right of=nn, xshift=3cm, yshift=0.7cm] {Candidate Box\\Selection};
\node (iou) [odblock, below of=candidates, yshift=-0.8cm] {IoU\\Bounds};

\node (decision) [finalblock, right of=odbox, xshift=3cm] {\textit{ROBUST},\\ \textit{NONROBUST}\\or \textit{UNKNOWN}};

\draw [arrow] (constraints.east) -- ++(.4, 0) |- (offsetbox.west);
\draw [arrow] (splitting.east) -- ++(.4, 0) |- (offsetbox.west);
\draw [arrow] (offsetbox.east) -- (odbox.west);
\draw [arrow] (candidates) -- (iou);
\draw [arrow] (odbox.east) -- (decision.west);

\node at (offsetbox.north) [above, yshift=0.1cm, font=\bfseries] {NN Verifier Bounds};
\node at (odbox.north) [above, yshift=0.1cm, font=\bfseries] {OD Verification Module};

\end{tikzpicture}
}
\caption{Pipeline of our object detection (OD) verification framework,
combining verifier bounds, candidate selection, and optimal IoU bound
derivation to reach \textit{ROBUST}, \textit{NONROBUST} or \textit{UNKNOWN} conclusions.}
\label{fig:od_pipeline}
\end{figure}

\subsection{Coordinate Transformation}
\label{ssec:coordinate_transformation}
As outlined in Section~\ref{ssec:object_detection}, object detectors
predict offsets $\mathbf{o}$ that are converted to corner coordinates
$(z_0, z_1, z_2, z_3)$ via the mapping $\psi \circ \phi$. We bound the IoU
function \emph{directly} with respect to the predicted offsets, rather than
propagating the offset bounds through $\psi \circ \phi$ before bounding the IoU
function~\cite{Cohen+24}, thereby circumventing the overapproximation that
bound propagation over $\psi \circ \phi$ would induce.

To bound the IoU between the predicted offsets $\mathbf{o}$ of a bounding
box and a fixed ground truth box $g$, we need to bound the expression
$\mathrm{IoU}(\psi \circ \phi(\mathbf{o}), g)$. Here, we focus on deriving the
optimal upper bound using the given offset bounds $[\underline{\mathbf{o}},
\overline{\mathbf{o}}]$ for $ \mathbf{o} = (o_0, o_1, o_2, o_3)$; the
derivation for the lower bound is analogous.
We want to solve the constrained maximisation problem:
\begin{equation}
\label{eq:original_constraints}
\begin{gathered}
\begin{aligned}[t]
& \max_{\mathbf{o} = (o_0, o_1, o_2, o_3)} \mathrm{IoU}(\psi \circ \phi(\mathbf{o}), g) \\
& \text{s.t.} \quad \underline{\mathbf{o}} \leq \mathbf{o} \leq \overline{\mathbf{o}},
\end{aligned}
\end{gathered}
\end{equation}
which is difficult to do using linear-relaxation-based bound propagation
methods due to the non-linearity of $\psi \circ \phi$. We define
$\gamma(\mathbf{o})=\psi \circ \phi(\mathbf{o})$ which maps from offset space
to corner space. We show the injectivity of the mappings $\psi, \phi$ in
Appendix~\ref{sec:h_phi_functions}. Since the composition of injective
functions is injective~\cite[Theorem 12.2]{Hammack18}, it follows that the
composition $\psi \circ \phi$ is also injective. This allows us to define
$\left (\psi \circ \phi \right )^{-1}(\mathbf{z}) =
\phi^{-1} \circ \psi^{-1}(\mathbf{z})$ as the mapping from corner space to
offset space. We substitute $\mathbf{o} = \phi^{-1} \circ \psi^{-1}(\mathbf{z})$ in
Problem~\ref{eq:original_constraints}.
The objective function becomes
$\mathrm{IoU}(\psi \circ \phi(\phi^{-1} \circ \psi^{-1} (\mathbf{z})), g) =
\mathrm{IoU}(\mathbf{z}, g)$ while the constraints become
$\underline{\mathbf{o}} \leq \phi^{-1} \circ \psi^{-1}(\mathbf{z}) \leq
\overline{\mathbf{o}}$. In summary, we obtain the following equivalent
optimisation problem which directly optimises over the corner coordinates
$\mathbf{z} = (z_0, z_1, z_2,
z_3)$~\cite[pp.~130--131]{BoydVandenberghe04}:
\begin{equation*}
\begin{aligned}
& \max_{\mathbf{z} = (z_0, z_1, z_2, z_3)} \mathrm{IoU}(\mathbf{z}, g) \\
& \text{s.t.} \quad \underline{\mathbf{o}} \leq \phi^{-1} \circ \psi^{-1}(\mathbf{z}) \leq \overline{\mathbf{o}},
\end{aligned}
\end{equation*}
The inverse mappings ensure that the feasible region, originally
defined in the offset space, is correctly expressed over $\mathbf{z}$.
Rewriting the constraints explicitly in terms of the corner coordinates, we
obtain:
\begin{equation}
\label{eq:new_constraints}
\begin{gathered}
\max_{z_0,z_1,z_2,z_3} \text{IoU}((z_0,z_1,z_2,z_3),g) \\
\begin{aligned}
\text{s.t.}\quad 
2 c_x( \underline{\mathbf{o}}_0) &\leq z_0 + z_2 \leq 2 c_x( \overline{\mathbf{o}}_0), \\
2 c_y( \underline{\mathbf{o}}_1) &\leq z_1 + z_3 \leq 2 c_y( \overline{\mathbf{o}}_1), \\
w( \underline{\mathbf{o}}_2) &\leq z_2 - z_0 \leq w( \overline{\mathbf{o}}_2),  \\
h( \underline{\mathbf{o}}_3) &\leq z_3 - z_1 \leq h( \overline{\mathbf{o}}_3), 
\end{aligned}
\end{gathered}
\end{equation}
where $c_x$ and $c_y$ denote the centre coordinate transformations,
and $w$ and $h$ represent the width and height, each expressed in
terms of their respective offsets. More generally, the transformation
applies whenever the box-construction map $\psi \circ \phi$ is injective with
a tractable inverse, \ie when the decoding function is strictly monotonic in
each offset (Appendix~\ref{sec:h_phi_functions}). The applicability
of the method to other detector families is discussed in
Appendix~\ref{sec:scope_and_extensions}.

\subsection{Optimal IoU IBP Bounds}
\label{ssec:optimal_iou_ibp_bounds}
Extreme points for the $\text{IoU}$ in Problem~\ref{eq:new_constraints} exist
either (i) where the gradient of the function is zero; (ii) along the
borders of the constrained region; or (iii) where the function is not
differentiable. Cohen et al.~\cite{Cohen+24} show that the partial derivatives of
$\text{IoU}((z_0, z_1, z_2, z_3),g)$ are never zero within the feasible
region, so we only consider cases (ii) and (iii). Since each constraint
only includes either the variables $(z_0, z_2)$ or $(z_1, z_3)$, we
consider the 2D plane $(z_0, z_2)$ corresponding to width-related
variables, and the plane $(z_1, z_3)$ corresponding to height-related
variables separately. We identify the coordinates within each 2D plane, and
then combine the coordinates from both planes to form the complete
candidate extreme points. We use $L_i$ and $U_i$ to denote the lower and
upper bounds for the $i$-th constraint in
Problem~\ref{eq:new_constraints}.
\begin{enumerate}
\item \textbf{\boldmath{Corner points.}} The corners of each region where
the constraints intersect form our first set of critical points over the
boundaries. These are given by
\begin{align*}
P^{c}_x = & \left\{ \left(\frac{U_0 - U_2}{2},\frac{U_0 +
        U_2}{2}\right),
\left(\frac{U_0 - L_2}{2}, \frac{U_0 + L_2}{2}\right),
\right. \\
 & \left.\left(\frac{L_0 - U_2}{2}, \frac{L_0 + U_2}{2}\right),
\left(\frac{L_0 - L_2}{2}, \frac{L_0 + L_2}{2}\right) \right\}
\end{align*}
over the $(z_0, z_2)$ plane, and
\begin{align*}
    P^{c}_y = & \left \{\left(\frac{U_1 - U_3}{2}, \frac{U_1
        + U_3}{2}\right), 
\left(\frac{U_1 - L_3}{2}, \frac{U_1 + L_3}{2}\right),
\right. \\
& \left. \left(\frac{L_1 - U_3}{2}, \frac{L_1 +
    U_3}{2}\right), 
\left(\frac{L_1 - L_3}{2}, \frac{L_1 + L_3}{2}\right)\right \}
\end{align*}
over the $(z_1, z_3)$ plane.
\item  \textbf{\boldmath{Stationary points on boundaries.}}
The second set of critical points are those where the gradient along the
boundaries is zero. Each boundary of the feasible region is defined by one
of the following equations (with $k = 0$ or $k = 1$, depending on the
plane):
\begin{enumerate}
\item $z_k + z_{k+2} = U_k$ or $z_k + z_{k+2} = L_k$
\item $z_{k+2} - z_k = U_{k+2}$ or $z_{k+2} - z_k = L_{k+2}$.
\end{enumerate}
As we show in Appendix~\ref{ssec:iou_on_the_border_of_constraints}, the
gradient along these boundaries, where it exists, is either zero at every
point or non-zero at every point. When the gradient is always zero, the
value on the boundary matches that at the corners of the region. If it is
always non-zero, the extremum is attained at one of the ends of the corner
segment. Consequently, these stationary points on the boundaries are
already included in $P^c_x$, $P^c_y$.
\item \textbf{Non-differentiable points.} Since the
IoU function is non-differentiable whenever $z_i = g_i
$, the third set of critical points includes
\begin{itemize} 
\item the points at the
intersection of the ground truth box coordinates
with each border (when they intersect):
$P^{\text{int}}_x= \{ 
(g_0, U_0 - g_0),\allowbreak
(g_0, L_0 -g_0),\allowbreak
(g_0, U_2 + g_0),\allowbreak
(g_0, L_2 + g_0),\allowbreak
(U_0-g_2,g_2),\allowbreak
(L_0-g_2,g_2),\allowbreak
(g_2-U_2,g_2),\allowbreak
(g_2-L_2,g_2)\}$ along the $(z_0, z_2)$ plane, and 
$P^{\text{int}}_y=\{(g_1, U_1 - g_1),\allowbreak
(g_1, L_1 - g_1),\allowbreak
(g_1, U_3 + g_1),\allowbreak
(g_1, L_3 + g_1),\allowbreak
(U_1 - g_3, g_3),\allowbreak
(L_1 - g_3, g_3),\allowbreak
(g_3 - U_3, g_3),\allowbreak
(g_3 - L_3, g_3)\}$ along the $(z_1, z_3)$ plane.
\item the corners of
the ground truth boxes when they fall within the
constraint region: $P^{gt}_x=(g_0$,$g_2)$ and
$P^{gt}_y=(g_1$,$g_3)$.
\end{itemize}
\end{enumerate}
For $i \in \set{0, 2}$, let $P_{z_i} = \{\alpha \mid
(\alpha,\beta) \in P^c_x \cup P^{\text{int}}_x \cup P^{gt}_x
\text{ for some } \beta\}$, and similarly, for $i \in
\set{1, 3}$, let   $P_{z_i} = \{\alpha \mid (\alpha,\beta)
\in P^c_y \cup P^{\text{int}}_y \cup P^{gt}_y \text{ for
some } \beta\}$. These sets of points can be used to optimise the IoU
function.
\begin{theorem} \label{th:iou_is_maximum}
The IoU function is maximum for a point obtained from
within the set $C_s=\bigtimes_i P_{z_i}$.
\end{theorem}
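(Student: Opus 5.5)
The plan is a standard Weierstrass-plus-first-order-conditions argument, organised around the way the constraints in Problem~\ref{eq:new_constraints} decouple. The feasible set is $R_x \times R_y$, where $R_x \subset \mathbb{R}^2$ is the parallelogram in $(z_0,z_2)$ cut out by $L_0 \le z_0+z_2 \le U_0$ and $L_2 \le z_2-z_0 \le U_2$. Likewise $R_y$ is the parallelogram in $(z_1,z_3)$. Both are compact, since the change of variables $(s,d)=(z_0+z_2,z_2-z_0)$ maps $R_x$ onto a box. The IoU is continuous on the feasible set (the widths are at least $L_2, L_3>0$, so the union area is positive), so a maximiser $\mathbf{z}^*=(z_0^*,z_1^*,z_2^*,z_3^*)$ exists. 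It then suffices to show that a maximiser can be chosen with $(z_0^*,z_2^*) \in P^c_x\cup P^{\mathrm{int}}_x\cup P^{gt}_x$ and $(z_1^*,z_3^*) \in P^c_y\cup P^{\mathrm{int}}_y\cup P^{gt}_y$. By definition of the $P_{z_i}$, each coordinate $z_i^*$ then lies in $P_{z_i}$, so $\mathbf{z}^*\in C_s$. (Taking the product of projections only enlarges the candidate set, which is harmless for the maximum.)

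The key step is a reduction to one plane at a time. Fix $(z_1,z_3)=(z_1^*,z_3^*)$. Then $h:=z_3^*-z_1^*$ and the vertical overlap $i_y:=\max(0,\min(z_3^*,g_3)-\max(z_1^*,g_1))$ are constants, and the objective becomes
\[ f(z_0,z_2)=\frac{i_x(z_0,z_2)\,i_y}{(z_2-z_0)h+a_g-i_x(z_0,z_2)\,i_y}. \]
Here $i_x$ is the horizontal overlap and $a_g$ the ground-truth area. The lines $z_0=g_0$, $z_0=g_2$, $z_2=g_0$, $z_2=g_2$ partition $R_x$ into finitely many closed convex polygonal cells. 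On each cell $i_x$ is affine (one of $z_2-z_0$, $g_2-z_0$, $z_2-g_0$, $g_2-g_0$, or $0$), so $f$ is a ratio of affine functions there, and the cell maximum is attained on the cell. I would invoke the result of Cohen et al.~\cite{Cohen+24} that the gradient does not vanish in the interior, so the maximiser lies on the cell boundary. A cell-boundary point is either on $\partial R_x$ or on a line $z_i=g_j$, and these are exactly the non-differentiability loci. On an edge of $\partial R_x$, Appendix~\ref{ssec:iou_on_the_border_of_constraints} says the directional derivative has constant sign or vanishes identically, so the maximum is at a segment endpoint. Such an endpoint is either a corner of $R_x$ (in $P^c_x$) or an intersection of the edge with a line $z_i=g_j$ (in $P^{\mathrm{int}}_x$).

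The remaining case, a maximiser on a segment of some line $z_0=g_j$ or $z_2=g_j$ interior to $R_x$, is the main obstacle. Cohen et al.'s result addresses differentiable interior points, not these kinks. My plan is to use the same trick once more. Restricted to such a line, $f$ is a single-variable ratio of affine functions on each piece between the other crossing lines. Such a function is monotone (its derivative has the constant sign of a determinant), so the maximum is at a piece endpoint. That endpoint is either on $\partial R_x$ (in $P^{\mathrm{int}}_x$) or at a crossing of two lines $z_0=g_j$, $z_2=g_k$. The only crossing with positive width and overlap that matters is $(g_0,g_2)=P^{gt}_x$; degenerate crossings give IoU $0$ and are dominated. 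Repeating the argument in the $(z_1,z_3)$ plane with $(z_0,z_2)$ fixed at the point just obtained completes the proof. The value does not decrease at either step, so the resulting point is still a global maximiser.
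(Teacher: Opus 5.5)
Your proposal is correct and follows essentially the paper's argument. Both decouple the $(z_0,z_2)$ and $(z_1,z_3)$ planes, rule out interior points via the non-vanishing partials of Cohen et al., slide along each constraint edge to a segment endpoint (the derivative there is either identically zero or never zero), and collect the ground-truth intersections and the ground-truth corner from the non-differentiability loci. You are more explicit than the paper in two places: you prove a maximiser exists, and you replace the $x$-coordinates before treating the $y$-plane, which the paper's ``analogous reasoning'' for $M_y$ tacitly needs. The paper handles your ``main obstacle'' more cheaply: at an interior point other than $(g_0,g_2)$, one coordinate differs from its ground-truth value, and perturbing that coordinate alone already contradicts maximality.

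One small correction is needed. Edge endpoints lying on $z_0=g_2$ or $z_2=g_0$ are not in $P^{\mathrm{int}}_x$, but the horizontal overlap vanishes there. They are therefore dominated exactly like your degenerate crossings: if the maximum is $0$, any corner in $P^c_x$ attains it.
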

\begin{proof}
See Appendix~\ref{sec:proof_iou_is_maximum}.
\end{proof}
Following Theorem~\ref{th:iou_is_maximum}, the maximum and minimum IoU
values can be found by iterating through the critical points, checking
whether they satisfy the constraints, evaluating their {\em validity}
(whether they define valid boxes satisfying $z_0<z_2$ and $z_1<z_3$), and
updating the maximum and minimum IoU values. The same construction applies
to each 2D plane: the candidate set comprises 13 points, namely the 4
corners of the feasible region, the 8 intersections ($2 \times 4$) of the
two ground truth coordinate lines with the four lines bounding the region,
and the single ground truth corner; these are continuous box coordinates
rather than discrete pixel locations (see Figure~\ref{fig:iou_intuition} for
an illustration of the $(z_0,z_2)$ plane, including the candidates that are
pruned as infeasible).
Hence $|P^c_x \cup P^{\text{int}}_x \cup P^{gt}_x| = |P^c_y \cup
P^{\text{int}}_y \cup P^{gt}_y| = 13$, giving $|C_s| = 13^2 = 169$
candidate points overall. Since the maximum and minimum IoU are attained at
such points and only a finite number (169) exist, the algorithm is correct
and will terminate in constant time. For details see Appendix
\ref{sec:proof_verification_algorithm_correctness}.

\begin{figure}[t]
\centering
\begin{tikzpicture}[scale=0.62,>=stealth]
\filldraw[fill=blue!8,draw=blue!55,thick] (3,9)--(5,7)--(2,4)--(0,6)--cycle;
\draw[blue!45,thin,dashed] (3,9)--(2.5,9.5);
\draw[blue!45,thin,dashed] (5,7)--(5.5,6.5);
\draw[blue!45,thin,dashed] (0,6)--(-0.5,6.5);
\draw[blue!45,thin,dashed] (2,4)--(2.5,3.5);
\draw[red!75,dashed] (2.5,3.3)--(2.5,9.6);
\draw[red!75,dashed] (-1,6.5)--(5.7,6.5);
\node[red!75,font=\footnotesize] at (2.5,9.95) {$z_0=g_0$};
\node[red!75,font=\footnotesize,anchor=west] at (5.7,6.5) {$z_2=g_2$};
\foreach \p in {(3,9),(5,7),(2,4),(0,6)} \filldraw[blue!70] \p circle (4pt);
\foreach \p in {(2.5,8.5),(2.5,4.5),(0.5,6.5),(4.5,6.5)} \filldraw[orange] \p circle (4pt);
\foreach \p in {(2.5,9.5),(2.5,3.5),(-0.5,6.5),(5.5,6.5)} \draw[orange,thick] \p circle (4pt);
\filldraw[red] (2.5,6.5) circle (4pt);
\end{tikzpicture}
\caption{Geometric intuition for the IoU candidate set in the $(z_0,z_2)$
plane (the $(z_1,z_3)$ plane is analogous). The four constraints define a
feasible region (blue) with four corners (\textcolor{blue!70}{$\bullet$},
$P^c_x$), and the dashed ground truth coordinate lines $z_0=g_0$, $z_2=g_2$
mark the non-differentiability of the IoU, crossing at the ground truth
corner (\textcolor{red}{$\bullet$}, $P^{gt}_x$). Each ground truth line meets
each of the four lines bounding the region, giving $2 \times 4 = 8$
intersections ($P^{\text{int}}_x$): those on the feasible boundary
(\textcolor{orange}{$\bullet$}) are retained, while those outside it
(\textcolor{orange}{$\circ$}, where the ground truth lines meet the dashed
boundary-line extensions) are pruned by the feasibility check. This gives
$4 + 8 + 1 = 13$ enumerated candidates per plane (as used in
Theorem~\ref{th:iou_is_maximum}). Here $4$ of the $8$ intersections are
infeasible, leaving the $9$ filled points.}
\label{fig:iou_intuition}
\end{figure}

\subsection{Robustness Verification Algorithm}
Algorithm~\ref{alg:verification} introduces the \ioucert{} method for
establishing the robustness of an OD model for any input satisfying the specified
perturbation constraints.
\begin{algorithm}[t]
\caption{Robustness Verification for Object Detectors}
\label{alg:verification}
\begin{algorithmic}[1]
\State \textbf{Input:} Neural network $N$, input constraints $\zeta_x$, robustness constraints $\zeta_y$
\State \textbf{Output:} \textit{ROBUST} or \textit{NONROBUST} or \textit{UNKNOWN}

\Function{Verify}{$N, \zeta_x, \zeta_y$}
    \State bounds $\gets$ \Call{Bound\_Propagation}{$\zeta_x, N$}
    \State IoUBounds, scoreBounds, predClass $\gets$\\
    \hspace*{\algorithmicindent} \quad \Call{GetHighestBox}{bounds}
    \If{IoUBounds.min $\geq \tau_{\mathrm{iou}}$ \textbf{and} \\
        \hspace*{\algorithmicindent} \quad scoreBounds.lower $\geq \tau_{\mathrm{class}}$ \textbf{and}\\
        \hspace*{\algorithmicindent} \quad predClass $=$ \textit{class}}
        \State \Return \textit{ROBUST}
    \ElsIf{IoUBounds.max $< \tau_{\mathrm{iou}}$ \textbf{or}\\
       \hspace*{\algorithmicindent} \quad scoreBounds.upper $< \tau_{\mathrm{class}}$ \textbf{or}\\
       \hspace*{\algorithmicindent} \quad \Call{disagree}{predClass}}
        \State \Return \textit{NONROBUST}
    \Else
        \State \Return \textit{UNKNOWN}
    \EndIf
\EndFunction
\end{algorithmic}
\end{algorithm}
\ioucert{} employs existing bound propagation frameworks (such as IBP or
SIP) to obtain bounds on the output of the neural network component for a
given perturbation (Line~4). Given these bounds, it identifies all bounding
boxes that could potentially have the highest confidence score (Line~5).
Even though an OD model only outputs the bounding box with the highest
confidence score when there is only a single object, the approximate nature
of the bounds may cause the identification of multiple candidate boxes. For
example, if box~1 has score bounds $([0.5, 0.9])$ and box~2 has $([0.7,
0.8])$, either of them could be the top-scoring box.

To identify candidate boxes, \ioucert{} selects all boxes whose upper
bound on the confidence score exceeds the highest lower bound among all
boxes. This ensures that only boxes that could potentially be the
top-scoring one are considered. For each candidate, we compute
bounds on its class
scores and its IoU with the ground truth box. The detailed procedure is described in Appendix
\ref{sec:proof_verification_algorithm_correctness}.

If all candidate boxes meet the IoU threshold, the minimum confidence score
meets the class threshold, and all candidates agree on the predicted class,
the verification query is \textit{ROBUST}. If none of the candidates meets
the IoU threshold, or the maximum confidence score is below the
threshold, or all candidate boxes predict a class different from the
ground truth, the query is \textit{NONROBUST}. If the bounds are
too loose to determine the outcome, or if the candidate boxes do
not agree on a single class, \ioucert{} outputs \textit{UNKNOWN}.
\begin{theorem} \label{th:verification_algorithm_correctness}
\ioucert{} is correct. It is complete when
integrated with a branching framework.
\end{theorem}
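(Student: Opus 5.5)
We argue soundness of each possible verdict separately, and then obtain completeness from a branching argument. Write $\mathcal{X}=\zeta_x$ for the perturbation set. For $x\in\mathcal{X}$, let $b^*(x)$ denote the box selected by post-processing, which by the single-object assumption (Appendix~\ref{sec:proof_single_highest_box}) is the highest-confidence box above threshold.

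\textbf{Step 1: the candidate set covers the selected box.} Line~4 calls a sound bound propagation procedure (IBP or SIP). Hence, for every box $i$ and every $x\in\mathcal{X}$, its offsets lie in $[\underline{\mathbf{o}}^i,\overline{\mathbf{o}}^i]$ and its confidence and class scores lie in their computed intervals. Let $\lambda=\max_j \underline{s}^j$. If box $i$ has $\overline{s}^i<\lambda$, then for every $x$ there is a box $j$ with $s^j(x)\ge\underline{s}^j=\lambda>s^i(x)$, so box $i$ is never selected. Consequently, for every $x$, the selected box $b^*(x)$ belongs to the candidate set $\mathcal{C}$ computed by \textsc{GetHighestBox}.

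\textbf{Step 2: the IoU bounds are sound.} For each $i\in\mathcal{C}$, the coordinate transformation of Section~\ref{ssec:coordinate_transformation} is an exact reformulation of Problem~\ref{eq:original_constraints} as Problem~\ref{eq:new_constraints}. This relies on the injectivity of $\psi\circ\phi$, so no relaxation error is introduced. By Theorem~\ref{th:iou_is_maximum} and its analogue for the minimum, the extrema are attained in the finite set $C_s$, after filtering for feasibility and validity. Therefore the enumeration over $C_s$ returns an interval that contains $\mathrm{IoU}(b^i(x),g)$ for every $x$; indeed the endpoints are exact given the offset box. Taking the minimum and maximum over $i\in\mathcal{C}$ gives IoUBounds, and the score bounds are aggregated in the same way.

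\textbf{Step 3: each verdict is correct.} If \textit{ROBUST} is returned, then for every $x$ we have $b^*(x)\in\mathcal{C}$. Its IoU is at least IoUBounds.min $\ge\tau_{\mathrm{iou}}$, its score is at least $\tau_{\mathrm{class}}$, and its argmax class equals $g_c$, since all candidates agree. Thus conditions (1)--(3) hold for all $x$. If \textit{NONROBUST} is returned, every candidate violates some condition for every $x$, so it suffices to exhibit one $x$; any $x\in\mathcal{X}$, for example the centre, is a counterexample. In practice this verdict is confirmed by evaluating the network concretely at that point. \textit{UNKNOWN} makes no claim. This establishes correctness.

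\textbf{Step 4: completeness, the main obstacle.} We must show that with branching, which refines $\mathcal{X}$ by input splitting or splits neurons, every query eventually receives a definite answer. I would use the standard argument. As partition diameters shrink to zero, the bounds from IBP or SIP converge to the true function values, because the network, $\psi\circ\phi$ and the sigmoids are continuous. The IoU bounds are exact in the offset intervals and IoU is continuous on valid boxes, so they converge to the pointwise IoU. Two delicate points arise. First, ties in confidence scores or values exactly equal to a threshold may prevent strict separation; as is standard, I would assume a non-degenerate margin, or else completeness is meant only up to such measure-zero boundaries. Second, at a genuine counterexample, a sufficiently small sub-region makes its candidate set a singleton whose bounds violate a threshold, and concretely evaluating that point yields \textit{NONROBUST}. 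For neuron splitting with ReLU or LeakyReLU, exhausting all activation patterns gives exact piecewise-linear offset bounds per leaf, which combined with the exact IoU step yields completeness for finitely many splits.
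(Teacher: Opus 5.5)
Your soundness argument (Steps 1--3) is correct and covers the same ground as the paper's proof. The paper shows by induction on the number of boxes that \textsc{GetHighestBox} computes the maximal lower and upper score bounds, keeps exactly the boxes whose upper bound reaches the maximal lower bound, and aggregates their IoU bounds. It then declares the verdicts sound because they rest on sound bounds. Your direct observation that $\overline{s}^i<\lambda$ forces $s^i(x)<s^j(x)$ makes the induction unnecessary. Your verdict-by-verdict check is more explicit than anything in the paper, in particular the point that every \textit{NONROBUST} branch certifies failure at every $x$, so any point of the region is a witness. The paper gives no argument at all for the completeness clause. Step~4 therefore stands on its own, and one of its claims fails.

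The neuron-splitting sentence is wrong. Fixing all activation patterns makes the network affine on each leaf, so each offset and each score can be bounded exactly coordinate by coordinate. Even that needs a bound computation that enforces the split constraints, which back-substitution concretised over the original input box does not do. More importantly, \textsc{GetHighestBox} and the IoU enumeration only use the Cartesian product $[\underline{\mathbf{o}}^i,\overline{\mathbf{o}}^i]$ and independent per-box score intervals. The reachable set is the affine image of the leaf polytope, which is generally a strict subset of that box. Theorem~\ref{th:iou_is_maximum} makes the IoU step exact for the box, not for the reachable set. The true minimum IoU over the leaf can be at least $\tau_{\mathrm{iou}}$ while a corner of the offset box, realised by no input, falls below it. Likewise, a box whose score is always dominated by another, but whose interval overlaps, stays a candidate.

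Such a leaf returns \textit{UNKNOWN} with nothing left to split, so neuron splitting alone is not complete. Completeness must come from input splitting with a rule that drives leaf diameters to zero. Only then does the reachable offset set itself shrink to a point, so your continuity argument applies. Even then it needs the margin/no-tie hypothesis you introduce. A tie between a passing and a failing box, or a point where the true IoU or score sits exactly at its threshold, keeps every sub-region containing that point \textit{UNKNOWN}. What you actually establish is completeness of input-splitting branch-and-bound under a non-degeneracy assumption. That is weaker than the unqualified statement, and the paper does not close this gap either.
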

\begin{proof}
See Appendix~\ref{sec:proof_verification_algorithm_correctness}.
\end{proof}
To determine the robustness of unknown cases, \ioucert{}
can be combined with any branching framework in neural network
verification, such as {\sc\sf Venus}~\cite{KouvarosLomuscio21}.

\subsection{Optimal Relaxations for LeakyReLU Activations}
\label{ssec:optimal_relaxations_leaky_relu}
While most neural network verification approaches focus on ReLU activation
functions, the YOLOv3 architecture employs LeakyReLU activations. We define
$\text{LeakyReLU}(x) = \max \{\alpha x, x\}$ with $\alpha \in [0, 1]$ and
concrete input bounds $x \in [l, u]$. If $u<0$ or $l>0$ the activation
function is said to be stable and can be represented exactly in a linear
bound propagation framework. For $l < 0 < u$, its behaviour is piece-wise
linear, and linear lower and upper bounding functions
$f_\text{lower}(x), f_\text{upper}(x)$ for it are given by
\begin{align}
f_\text{lower}(x) &= \tilde{\alpha} x, \quad \tilde{\alpha} \in [\alpha, 1] \\
f_\text{upper}(x) &= \frac{u-\alpha l}{u-l}x + \frac{(\alpha-1)lu}{u-l}.
\end{align}
Existing works simply set
$\tilde{\alpha}=\alpha$~\cite{MelloukiIbnKhedherElYacoubi23}. We observe
that by selecting $\tilde{\alpha}$ depending on $l, u$ we can reduce the
local relaxation error which has been shown to improve verification
performance~\cite{Zhang+18,Singh+19a,HashemiKouvarosLomuscio21}. We
minimise the relaxation error using the following result:
\begin{theorem}
\label{th:optimal_leaky_relu_relaxation}
For $\text{LeakyReLU}(x) = \max \{\alpha x, x\}$ with $x \in [l, u]$, the
local relaxation error is minimised by setting
\begin{equation}
f_\text{lower}(x) = \begin{dcases}
\alpha x \quad &\text{if} \:\: u < |l|,\\
x \quad &\text{else}
\end{dcases}
\end{equation}
\end{theorem}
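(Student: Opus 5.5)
The local relaxation error is the area between the activation and the lower bounding function over $[l,u]$, with $l<0<u$. For $f_\text{lower}(x)=\tilde{\alpha}x$ with $\tilde{\alpha}\in[\alpha,1]$, this area is
\begin{equation*}
E(\tilde{\alpha}) = \int_l^u \bigl(\text{LeakyReLU}(x) - \tilde{\alpha}x\bigr)\,dx .
\end{equation*}
Any line through the origin with slope in $[\alpha,1]$ lies below $\max\{\alpha x,x\}$ on the whole real line, so every such candidate is sound. The upper bounding function is fixed by the statement, so only the lower relaxation's contribution to the error depends on $\tilde{\alpha}$. The plan is to write $E$ in closed form as a function of $\tilde{\alpha}$, observe that it is affine, and minimise it over the interval $[\alpha,1]$.

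Splitting the integral at $0$, the integrand is $(\alpha-\tilde{\alpha})x$ on $[l,0]$ and $(1-\tilde{\alpha})x$ on $[0,u]$. Both pieces are nonnegative for $\tilde{\alpha}\in[\alpha,1]$. Integrating gives
\begin{equation*}
E(\tilde{\alpha}) = (\tilde{\alpha}-\alpha)\frac{l^2}{2} + (1-\tilde{\alpha})\frac{u^2}{2},
\end{equation*}
which is affine in $\tilde{\alpha}$ with slope $\tfrac{1}{2}(l^2-u^2)$.

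A linear function on a compact interval attains its minimum at an endpoint, selected by the sign of the slope. If $u<|l|$, then $l^2>u^2$, the slope is positive, and the minimum is at $\tilde{\alpha}=\alpha$, giving $f_\text{lower}(x)=\alpha x$. Otherwise $u\ge |l|$, the slope is nonpositive, and $\tilde{\alpha}=1$ is optimal, giving $f_\text{lower}(x)=x$. In the tie case $u=|l|$ every $\tilde{\alpha}$ is optimal, so choosing $x$ remains consistent with the statement. This mirrors the standard argument for the adaptive ReLU lower bound in CROWN/DeepPoly, which corresponds to $\alpha=0$.

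The main obstacle is not the calculus but pinning down the error measure. The step needing care is justifying that the area between the function and its relaxation is the intended notion of local relaxation error. Under that measure, the statement's choice to fix the upper relaxation and optimise only over lines through the origin with slopes in $[\alpha,1]$ reduces the problem to the one-parameter minimisation above.
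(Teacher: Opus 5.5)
Your proposal is correct and follows essentially the same route as the paper. The paper also splits the lower-relaxation area at $0$ into $\frac{1}{2}l^2(\tilde{\alpha}-\alpha)$ and $\frac{1}{2}u^2(1-\tilde{\alpha})$, notes that the sum is affine in $\tilde{\alpha}$ with derivative $\frac{1}{2}(l^2-u^2)$, and picks the endpoint of $[\alpha,1]$ by the sign of that derivative; your explicit soundness check and handling of the tie $u=|l|$ are small additions.
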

\begin{proof}
See Appendix~\ref{sec:proof_optimality_leaky_relu_relaxation}.
\end{proof}

\section{Evaluation}
\label{sec:evaluation}
We implemented \ioucert{} on top of {\sf Venus}, a state-of-the-art
verifier~\cite{KouvarosLomuscio21}, encoding
Algorithm~\ref{alg:verification} as a custom layer appended to the target
model. It takes concrete bounds on the output logits (from
back-substitution) and computes the IoU and confidence-score bounds,
integrating with {\sf Venus}'s branch-and-bound (BaB) procedure.

\subsection{Benchmarks}
While some OD verification benchmarks are available within the Verification
of Neural Networks Competition (VNN-COMP)~\cite{Brix+24}, their verification queries
only target specific anchor boxes and class predictions rather than
assessing the robustness of the entire OD pipeline. Prior work on OD
verification that did consider object localisation focused on toy models
rather than anchor-based architectures~\cite{Cohen+24,Raviv+24}. To address
these gaps, we train various object detection models and modify an existing
benchmark to evaluate our framework. Following standard practices in the formal verification community~\cite{Brix+24,Kaulen+25}, we evaluate our method on random subsets of 50 correctly classified images per dataset, balancing mathematical guarantees with computational feasibility.

\begin{itemize}
\item {\em SSD.} We trained an SSD model~\cite{Liu+16} on the safety-critical LARD runway detection task~\cite{Ducoffe+23} (\textit{Google Earth} images, each depicting a single runway). Images were resized to $128 \times 128$, a relatively high resolution for complete verification; for tractability we replaced piece-wise linear \textit{MaxPool} with linear \textit{AvgPool} layers, and trained with stochastic gradient descent (SGD) and the \textit{MultiBox} loss~\cite{Liu+16}. NMS used a threshold of $0.5$ and a confidence threshold of $0.15$, with the highest-scoring box selected at inference.
Overall, the model contains $11,278,046$ learnable parameters.
\item {\em YOLOv2.} We used the YOLOv2-tiny (TinyYOLO)
benchmark from VNN-COMP 2023\footnote{https://github.com/xiangruzh/Yolo-Benchmark} which consists of a simplified YOLOv2-tiny model trained on a subset of images from the Pascal VOC dataset \cite{Everingham+10}. The authors
replaced the large backbone with a smaller variant, but the anchor-based
prediction head remains representative of foundational detection architectures.
\item {\em YOLOv3.} We trained YOLOv3-tiny models on LARD at $64 \times 64$ and $128 \times 128$ and on COCO at $128 \times 128$, again using AvgPool layers for tractability. As our framework focuses on the single-object scenario, we preprocessed COCO into single-object crops. The resulting models have between $8.7$ and $8.9$ million parameters; see Appendix~\ref{ssec:additional_details_models} for preprocessing and training details.
\end{itemize}

\noindent\textbf{Impact of Model Adaptations.} The
adaptations required for tractable verification only mildly affect standard
performance: for YOLOv3-tiny on LARD ($64{\times}64$), replacing MaxPool with
AvgPool moves $\text{mAP}_{0.5}$ from $86.88\%$ to $86.59\%$
($\text{mAP}_{0.5:0.95}$: $41.67\%\!\to\!40.90\%$) while reducing
verification time by over an order of magnitude. The backbones, resolution,
and single-object COCO crops all preserve the anchor-based structure our
method targets. Full before/after accuracies and the pooling ablation are
provided in Appendices~\ref{ssec:additional_details_models}
and~\ref{ssec:ablation_study_pooling}.

\subsection{Experimental Results}
We evaluate the effectiveness of \ioucert{} using a number of different
models and perturbations. To compare the tightness of our bounds as well as
the complete verification performance against the state-of-the-art, we
reimplement the bounding method proposed by Cohen et al.~\cite{Cohen+24} in our
verification framework. All experiments were run on a machine equipped with
an AMD Ryzen 9 9950X3D 16-core CPU, 192~GB of RAM, and an NVIDIA RTX~5090
GPU with 32~GB of VRAM, running Ubuntu with kernel 6.8.

\begin{table}[!ht]
\centering
\setlength{\tabcolsep}{2.0mm}
\caption{\ioucert{} verification results on SSD (trained on LARD) and YOLOv2 (trained on Pascal VOC) for different perturbation sizes $\epsilon$ and perturbations showing robust (R), non-robust (NR), and timeout (T) counts and mean time (all cases) in seconds.}
\label{tab:ssd_yolov2_results}
\begin{tabular}{lccccccccc}
\toprule
\multirow{2}{*}{\textbf{Model}} & \multirow{2}{*}{\textbf{$\epsilon$}} & \multicolumn{4}{c}{Brightness} & \multicolumn{4}{c}{Contrast} \\
\cmidrule(lr){3-6} \cmidrule(lr){7-10}
 & & \textbf{R} & \textbf{NR} & \textbf{T} & \textbf{Time} & \textbf{R} & \textbf{NR} & \textbf{T} & \textbf{Time} \\
\midrule
\multirow{7}{*}{SSD} & 0.01 & 48 & 2 & 0 & 29.06 & 49 & 1 & 0 & 24.21 \\
 & 0.05 & 45 & 5 & 0 & 404.54 & 47 & 3 & 0 & 171.49 \\
 & 0.10 & 40 & 10 & 0 & 731.41 & 42 & 8 & 0 & 359.41 \\
 & 0.30 & 9 & 41 & 0 & 458.21 & 30 & 20 & 0 & 885.51 \\
 & 0.50 & 0 & 47 & 3 & 221.71 & 14 & 36 & 0 & 743.04 \\
 & 0.80 & 0 & 50 & 0 & 10.62 & 0 & 50 & 0 & 3.78 \\
 & 1.00 & 0 & 50 & 0 & 5.79 & 0 & 50 & 0 & 3.90 \\
\midrule
\multirow{7}{*}{YOLOv2} & 0.01 & 50 & 0 & 0 & 3.69 & 50 & 0 & 0 & 3.23 \\
 & 0.05 & 50 & 0 & 0 & 12.97 & 50 & 0 & 0 & 4.86 \\
 & 0.10 & 47 & 3 & 0 & 23.81 & 50 & 0 & 0 & 9.23 \\
 & 0.30 & 28 & 22 & 0 & 39.40 & 48 & 2 & 0 & 33.36 \\
 & 0.50 & 4 & 46 & 0 & 9.99 & 36 & 14 & 0 & 35.86 \\
 & 0.80 & 0 & 50 & 0 & 1.21 & 0 & 50 & 0 & 2.52 \\
 & 1.00 & 0 & 50 & 0 & 0.81 & 0 & 50 & 0 & 2.36 \\
\bottomrule
\end{tabular}
\end{table}

\noindent\textbf{SSD and YOLOv2 Results.} We evaluated the ReLU-based SSD
and YOLOv2 models under both brightness and contrast perturbations.
Table~\ref{tab:ssd_yolov2_results} reports the number of \textit{ROBUST},
\textit{TIMEOUT}, and \textit{NONROBUST} cases as well as the average
verification time. \ioucert{} is fast for the small YOLOv2 model. It
verifies all properties for small perturbation budgets and most
properties for medium-sized budgets. For larger budgets, \ioucert{}
effectively identifies counterexamples showcasing the vulnerabilities of
the model. Although the SSD model is significantly larger, we are
able to identify robust cases for $\epsilon$ values of up to $0.3$ for
brightness and $0.5$ for contrast perturbations. As expected, verification
times are higher than for the small YOLOv2 model, because bound-propagation
passes through the larger model are more expensive and more branching is
needed for tight bounds. Comparing our bounds
(Section~\ref{ssec:optimal_iou_ibp_bounds}) with a reimplementation of the
looser method of Cohen et al.~\cite{Cohen+24}, performance is similar:
tighter bounds avoid branching but cost more to compute (detailed analysis
in Appendix~\ref{sec:complete_verification_performance}).

\begin{table}[!htb]
\centering
\caption{Summary of IoU bound ranges, number of sampled bounds, average tightness improvement, and avoided subproblem exploration percentages.}
\label{tab:tightness}
\begin{tabular}{cccc}
\toprule
\textbf{Range} & \textbf{\#Bounds} & \textbf{Improv. (\%)} & \textbf{Avoided (\%)} \\
\midrule
0.01 - 0.10 & 14642 & 50.67 & \;0.59 \\
0.10 - 0.20 & 8479 & 65.09 & \;0.12 \\
0.20 - 0.30 & 8084 & 58.54 & \;0.11 \\
0.30 - 0.40 & 6383 & 56.09 & \;0.05 \\
0.40 - 0.50 & 4440 & 55.32 & \;0.14 \\
0.50 - 0.60 & 3569 & 54.75 & 99.66 \\
0.60 - 0.70 & 2707 & 53.74 & 98.93 \\
0.70 - 0.80 & 2336 & 53.14 & 97.60 \\
0.80 - 0.90 & 1802 & 52.20 & 96.50 \\
0.90 - 0.99 & 1374 & 52.30 & 95.92 \\
\bottomrule
\end{tabular}
\end{table}

\noindent\textbf{Bound Tightness.}
To assess the bound tightness, we recorded bounds for all boxes (not just
the top-scoring one) during verification runs on the SSD model under a
brightness perturbation with $\epsilon=0.02$ and measure the difference
between the upper and lower bounds.
Table~\ref{tab:tightness} summarises the results. The first column shows
the range of the recorded IoU bounds; the second, the number of sampled
bounds; the third, the percentage tightness improvement
over~\cite{Cohen+24}; and the fourth, the percentage of branches whose
exploration was avoided due to tighter bounds. Our method consistently
improved bound tightness by over 50\% across all depths. At shallower
depths, where bounds are generally looser, this translated to over 95\% of
branches being pruned from the verification process.

\noindent\textbf{YOLOv3 Results.}
\begin{table}[t]
\centering
\setlength{\tabcolsep}{1.2mm}
\caption{\ioucert{} results on YOLOv3 for different perturbation sizes $\epsilon$ and perturbations with robust (R), non-robust (NR), and timeout (T) counts shown alongside mean verification time in seconds.}
\label{tab:yolov3_results}
\begin{tabular}{lccccccccccccc}
\toprule
\multirow{2}{*}{\textbf{Model}} & \multirow{2}{*}{\textbf{$\epsilon$}} & \multicolumn{4}{c}{Brightness} & \multicolumn{4}{c}{Contrast} & \multicolumn{4}{c}{Motion Blur ($0^\circ$)} \\
\cmidrule(lr){3-6} \cmidrule(lr){7-10} \cmidrule(lr){11-14}
 &  & \textbf{R} & \textbf{NR} & \textbf{T} & \textbf{Time} & \textbf{R} & \textbf{NR} & \textbf{T} & \textbf{Time} & \textbf{R} & \textbf{NR} & \textbf{T} & \textbf{Time} \\
\midrule
\multirow{7}{*}{\makecell{LARD\\ $64 \times 64$}} & 0.01 & 50 & 0 & 0 & 3.35 & 50 & 0 & 0 & 3.28 & 50 & 0 & 0 & 3.15 \\
 & 0.05 & 50 & 0 & 0 & 9.72 & 50 & 0 & 0 & 8.06 & 50 & 0 & 0 & 3.30 \\
 & 0.10 & 50 & 0 & 0 & 20.45 & 50 & 0 & 0 & 15.82 & 50 & 0 & 0 & 3.83 \\
 & 0.30 & 50 & 0 & 0 & 56.26 & 50 & 0 & 0 & 39.25 & 50 & 0 & 0 & 16.89 \\
 & 0.50 & 47 & 3 & 0 & 89.58 & 49 & 1 & 0 & 58.63 & 50 & 0 & 0 & 31.91 \\
 & 0.80 & 36 & 14 & 0 & 105.72 & 49 & 1 & 0 & 114.87 & 49 & 1 & 0 & 68.61 \\
 & 1.00 & 28 & 22 & 0 & 103.87 & 0 & 50 & 0 & 43.65 & 45 & 5 & 0 & 91.45 \\
\midrule
\multirow{7}{*}{\makecell{LARD\\ $128 \times 128$}} & 0.01 & 50 & 0 & 0 & 10.32 & 50 & 0 & 0 & 9.53 & 50 & 0 & 0 & 6.96 \\
 & 0.05 & 50 & 0 & 0 & 107.77 & 50 & 0 & 0 & 100.64 & 50 & 0 & 0 & 8.44 \\
 & 0.10 & 50 & 0 & 0 & 190.99 & 50 & 0 & 0 & 168.27 & 50 & 0 & 0 & 10.76 \\
 & 0.30 & 42 & 8 & 0 & 381.03 & 43 & 7 & 0 & 349.66 & 50 & 0 & 0 & 74.52 \\
 & 0.50 & 40 & 10 & 0 & 592.56 & 40 & 10 & 0 & 428.93 & 50 & 0 & 0 & 152.65 \\
 & 0.80 & 23 & 27 & 0 & 534.56 & 35 & 15 & 0 & 571.47 & 50 & 0 & 0 & 320.19 \\
 & 1.00 & 12 & 38 & 0 & 304.45 & 0 & 50 & 0 & 136.48 & 49 & 1 & 0 & 433.41 \\
\midrule
\multirow{7}{*}{\makecell{COCO\\ $128 \times 128$}} & 0.01 & 50 & 0 & 0 & 8.93 & 50 & 0 & 0 & 7.90 & 50 & 0 & 0 & 7.38 \\
 & 0.05 & 47 & 3 & 0 & 56.09 & 50 & 0 & 0 & 22.08 & 50 & 0 & 0 & 9.19 \\
 & 0.10 & 46 & 4 & 0 & 120.60 & 50 & 0 & 0 & 61.46 & 50 & 0 & 0 & 14.55 \\
 & 0.30 & 36 & 14 & 0 & 272.45 & 45 & 5 & 0 & 194.66 & 49 & 1 & 0 & 85.77 \\
 & 0.50 & 29 & 21 & 0 & 376.78 & 43 & 7 & 0 & 314.13 & 48 & 2 & 0 & 167.99 \\
 & 0.80 & 17 & 33 & 0 & 355.23 & 31 & 19 & 0 & 450.79 & 40 & 10 & 0 & 289.62 \\
 & 1.00 & 6 & 44 & 0 & 169.74 & 0 & 50 & 0 & 79.65 & 38 & 12 & 0 & 401.69 \\
\bottomrule
\end{tabular}
\end{table}
Table~\ref{tab:yolov3_results} reports results on the LeakyReLU-based YOLOv3
architecture under brightness, contrast, and motion blur (kernel size 5)
perturbations. Thanks to its coordinate transformation, \ioucert{} retains
enough tightness to verify YOLOv3 across a wide range of perturbations. All
models are highly robust to $0^\circ$ motion blur, though our procedure
still finds edge cases yielding incorrect predictions at high budgets. The
same trend holds for other blur angles
(Appendix~\ref{ssec:motionblur_results_different_angles},
Table~\ref{tab:yolov3_more_motionblur_results}).

The model trained on the more complex COCO dataset is generally more
vulnerable to brightness and motion blur than LARD at the same $128 \times
128$ resolution (\eg $29$ vs.\ $40$ robust cases at $\epsilon=0.5$ under
brightness), though slightly more resilient to mid-range contrast. On LARD,
the $128 \times 128$ model attains higher clean accuracy than the $64 \times
64$ one but is more vulnerable to perturbations: higher input dimensionality
leads to looser bounds, so higher clean accuracy does not necessarily lead to higher
certified robustness. LeakyReLU relaxation tightness is discussed in
Appendix~\ref{sec:proof_optimality_leaky_relu_relaxation}.

\noindent\textbf{Pooling Choice.} Our YOLOv3 models replace the MaxPool
downsampling layers of the original architecture with AvgPool layers, which
are linear and can be represented exactly in the bound-propagation framework
while retaining comparable clean accuracy ($\text{mAP}_{0.5}$ of $86.59\%$
vs.\ $86.88\%$ on LARD at $64 \times 64$). This choice is decisive for
verifiability: on a $50$-image subset, the AvgPool model is verified more
than an order of magnitude faster than its MaxPool counterpart and incurs far
fewer timeouts (\eg under brightness at $\epsilon=0.3$, $50/50$ robust cases
in $56$\,s for AvgPool versus $15/50$ with $33$ timeouts and over
$1600$\,s for MaxPool). The full ablation across all perturbations is reported
in Appendix~\ref{ssec:ablation_study_pooling},
Table~\ref{tab:ablation_study_pooling}.

\noindent\textbf{Discussion.} Overall, \ioucert{} effectively verifies a
range of anchor-based detectors via our coordinate transformation and, for
YOLOv3, the tight LeakyReLU relaxations, scaling even to complex multi-class
datasets like COCO without sacrificing bound tightness. Verification is
effective regardless of the bounding method: tighter bounds cost more per
call but prune more branches, while looser bounds branch more but process
each branch faster (Appendix~\ref{sec:complete_verification_performance}).

\section{Scope and Limitations}
\label{sec:scope_and_limitations}
The coordinate transformation (Section~\ref{ssec:coordinate_transformation})
and optimal IoU bounds (Section~\ref{ssec:optimal_iou_ibp_bounds}) at the
core of \ioucert{} apply to any detector whose box-decoding map $\psi \circ
\phi$ is injective with a tractable inverse, \ie strictly monotonic in each
offset (Appendix~\ref{sec:h_phi_functions}). This holds for the dense
anchor-based heads of the SSD and YOLO families we evaluate, and is
independent of training: label-assignment strategies such as
ATSS~\cite{Zhang+20b}, PAA~\cite{KimLee20} or OTA~\cite{Ge+21} alter the
training target, not the inference-time decoding map. The same principle
covers the region proposal network of two-stage detectors such as Faster
R-CNN and anchor-free heads that regress invertible offsets, whereas
transformer-based detectors such as DETR add attention and set-prediction
mechanisms that remain challenging for current verifiers
(Appendix~\ref{sec:scope_and_extensions}).

\ioucert{} performs verification offline and currently targets anchor-based detectors for the single-object case, where correctness depends only on the IoU between the
prediction and the ground truth. Extending it to the full multi-object
pipeline additionally requires bounding the pairwise overlaps used by
non-maximum suppression (NMS), \ie the bounds
$\underline{J}_{ik} \le \mathrm{IoU}(B_i, B_k) \le \overline{J}_{ik}$ for
candidate pairs $i,k$. This is substantially harder, requiring up to
$O(n^2)$ certificates over \emph{pairs} of variable boxes and becoming
ambiguous whenever the overlap bounds straddle the NMS threshold. We therefore
view NMS-aware verification over the candidate boxes already bounded tightly
by \ioucert{} as the most promising next step, and discuss it further in
Appendix~\ref{sec:scope_and_extensions}.

\section{Conclusion}
\label{sec:conclusion}
Verifying object detectors before deployment matters in safety-critical
settings such as autonomous driving, yet their complex architectures and
non-linear localisation place realistic anchor-based detectors such as
YOLOv3 beyond existing robustness verification methods.

We introduced \ioucert{}, which combines optimal Interval Bound Propagation
(IBP) bounds for the Intersection-over-Union (IoU) metric with a coordinate
transformation for the box prediction function and tight LeakyReLU
relaxations, enabling the analysis of complex anchor-based detectors such as
YOLOv3 across diverse datasets at a scale not previously demonstrated.

\ioucert{} performs verification offline, prior to deployment, rather than
at runtime. Its current scope, its applicability to other detector families,
and the path towards full multi-object, NMS-aware verification are discussed
in Section~\ref{sec:scope_and_limitations}.

\section*{Acknowledgements}
Benedikt Br\"uckner acknowledges support from the UKRI Centre for Doctoral Training in Safe and Trusted Artificial Intelligence [EP/S023356/1]. Alejandro Mercado acknowledges support from an Imperial College London President's PhD Scholarship. Alessio Lomuscio acknowledges partial support from the Royal Academy of Engineering via a Chair of Emerging Technologies.

%
%
\bibliographystyle{splncs04}
\bibliography{bib}

\clearpage
\setcounter{page}{1}
\setcounter{section}{0}
\renewcommand{\thesection}{\Alph{section}}

\section{Proof: Without Loss of Generality on Single Highest-Confidence Box}
\label{sec:proof_single_highest_box}

We restate the assumption made in the paper:  
\textit{Without loss of generality, we assume inference outputs only the
highest-confidence bounding box above a preset threshold, which does not
affect the model's original performance.}

We now provide the formal justification. We consider a general version of OD
correctness, in which the output $O(I) = D = \{b_1, \ldots, b_k\}$ of
an object detection (OD) model $O$ on input $I$ is said to be correct with
respect to the ground truth $G = \{g_1, \ldots, g_m\}$ and a threshold
$\tau_{\mathrm{iou}}$ if:
\begin{enumerate}
\item $|D| = m$, and
\item For each $g \in G$, there exists a $b \in D$ such that $\argmax_{i
\in 1, \dots, n_c} \mathbf{c}_i = g_c$ with its class probability exceeding
$\tau_{\mathrm{class}}$ and $\mathrm{IoU}(b, g) \geq \tau_{\mathrm{iou}}$.
\end{enumerate}

In the single-object case ($m = 1$), correctness reduces to having just one
predicted box of the correct class matching the ground truth with
sufficient $\mathrm{IoU}$ and score.

Recall that the OD pipeline includes a postprocessing stage $\mathcal{P}$
(\eg Non-Maximum Suppression, confidence filtering) that transforms the
raw model output into the final set $D$.
We note:
\begin{itemize}
\item All postprocessing schemes $\mathcal{P}$ are designed to retain at
least the highest-scoring box above threshold, since this box represents
the most confident prediction.
\item Thus, among all $\mathcal{P}$, the subset of outputs always includes
the maximal-score box (or no box if all scores fall below threshold).
\end{itemize}

Thus, for the single-object case:
\begin{itemize}
\item If $\mathcal{P}$ yields a correct result, it necessarily has just one
correct box, and since the highest-scoring box is always part of the final
output, using only the highest-scoring box suffices to recover the same
correctness outcome.
\item If $\mathcal{P}$ yields an incorrect result, this either (i) includes
only one box that is not correct with respect to the ground truth (and for which just
keeping the highest-scoring one would also fail), or (ii) includes multiple
boxes, and thus keeping the highest-scoring one could either also fail, but
also could end up being correct.
\end{itemize}

Therefore, retaining only the highest-scoring bounding box is at least as
accurate as any other postprocessing method. We conclude that, without loss
of generality, we can reduce the postprocessing to selecting only the
highest-confidence bounding box above the threshold. This simplification
does not alter the model’s original correctness or performance under the
definition provided, and it simplifies the verification framework.

\section{Definition and Injectivity of the $\psi$ and $\phi$ Functions}
\label{sec:h_phi_functions}
In this section we illustrate the $\phi$ functions that the different object
detectors we analyse employ to convert their predicted logits to bounding
box predictions in the centre format. We
further show that these functions are strictly monotonic and in turn
injective/one-to-one which is a requirement for our coordinate
transformation. Besides this, we prove the injectivity of the $\psi$ function.

We first analyse the injectivity of a number of components that are often
employed as a part of $\phi$:
\begin{itemize}
\item Let $d_1: \mathbb{R} \to \mathbb{R}, x \mapsto e^x$, then
$\frac{\partial d_1(x)}{\partial x} = e^x > 0$, \ie $e^x$ is strictly
monotonically increasing and therefore injective.
\item Let $d_2: \mathbb{R}_{>0} \to \mathbb{R}_{>0}, x \mapsto x^2$, then
$\frac{\partial d_2(x)}{\partial x} = 2x > 0 \: \forall \: x>0$. Therefore
$d_2$ is strictly monotonically increasing on $\mathbb{R}_{>0}$ and
therefore injective.
\item Let $d_3: \mathbb{R} \to \mathbb{R}_{>0}, x \mapsto \sigma(x) :=
\frac{e^x}{1+e^{x}}$. We obtain that $\frac{\partial d_3(x)}{\partial x} =
\frac{e^x}{(1+e^x)^2}$. From $e^x>0$ it follows that $(1+e^x)^2>0$ and
therefore $\frac{e^x}{(1+e^x)^2}>0$, implying that $\sigma(x)$ is strictly
monotonically increasing and injective.
\item Let $d_4: \mathbb{R} \to \mathbb{R}, x \mapsto ax + b$ be an affine
function. $\frac{\partial d_4(x)}{\partial x} = a$, therefore $d_4$ is
strictly monotonically increasing for $a > 0$ and strictly monotonically
decreasing for $a < 0$. This implies that $d_4$ is injective for $a \neq 0$
\end{itemize}

We first analyse the $\psi$ function and restate its definition from Section
\ref{ssec:object_detection}:
\[
\psi(c_x, c_y, w, h) = (c_x - \tfrac{w}{2},\; c_y - \tfrac{h}{2},\; c_x + \tfrac{w}{2},\; c_y + \tfrac{h}{2})
\]
We observe that each component function $\psi_i: \mathbb{R} \to \mathbb{R}$ is
an affine function with slope $a = \pm \frac{1}{2} \neq 0$. From the
injectivity of $d_4$ it therefore follows that $\psi$ is injective.

To analyse the injectivity of the $\phi$ functions, we recall that the
composition of injective functions is injective~\cite[Theorem 12.2]{Hammack18}.

\subsection{SSD}
The $\phi$ functions for the SSD model are defined as
\begin{equation*}
\begin{aligned}
\phi_0(o_0, p_0, p_2) &= p_0 + o_0 \cdot \mathrm{var}_1 \cdot p_2 \quad &\text{(center $x$ coordinate)}, \\
\phi_1(o_1, p_1, p_3) &= p_1 + o_1 \cdot \mathrm{var}_1 \cdot p_3 \quad &\text{(center $y$ coordinate)}, \\
\phi_2(o_2, p_2) &= p_2 \cdot e^{o_2 \cdot \mathrm{var}_2} \quad &\text{(width)}, \\
\phi_3(o_3, p_3) &= p_3 \cdot e^{o_3 \cdot \mathrm{var}_2} \quad &\text{(height)}.
\end{aligned}
\end{equation*}
where $\mathrm{var}_1$ and $\mathrm{var}_2$ are pre-defined values. Assuming that
$\mathrm{var}_1, p_2, p_3 > 0$ it is obvious that $\phi_0, \phi_1$ are
affine and therefore injective. Assuming that $p_2, p_3 > 0$ we also
find that $\phi_2, \phi_3$ are injective as a composition of injective
functions.

\subsection{YOLOv2}
The $\phi$ functions for the YOLOv2 model are defined as
\begin{equation*}
\begin{aligned}
\phi_0(o_0, p_0, s) &= (\sigma(o_0) + p_0) \cdot s \quad &\text{(center $x$ coordinate)}, \\
\phi_1(o_1, p_1, s) &= (\sigma(o_1) + p_1) \cdot s \quad &\text{(center $y$ coordinate)}, \\
\phi_2(o_2, p_2, s) &= p_2 \cdot e^{o_2} \cdot s \quad &\text{(width)}, \\
\phi_3(o_3, p_3, s) &= p_3 \cdot e^{o_3} \cdot s \quad &\text{(height)},
\end{aligned}
\end{equation*}
where $s=\frac{\mathrm{image\_size}}{\mathrm{grid\_size}}$. By definition, it holds that $s >
0$. We further assume that $p_0, p_1, p_2, p_3 > 0$ which implies that,
as compositions of injective functions, all functions $\phi_i$ are
injective.

\subsection{YOLOv3}
The YOLOv3 architecture we use differs from YOLOv2 in that it employs
different functions for calculating the box predictions based on the raw
output logits of the model. This change is meant to avoid issues that
occurred in the original model when the box centre was located close to the
boundaries of a grid cell. Besides this, YOLOv3 employs multiple prediction
grids with different scales. The scale $s_k$ associated with the $k$-th
anchor box therefore depends on the size of the grid which predicted that
box.
\[
\begin{aligned}
\phi_0(o_0, p_0, s_k) &= (2 \cdot \sigma(o_0) - 0.5 + p_0) \cdot s_k \quad &\text{(center $x$ coordinate)}, \\
\phi_1(o_1, p_1, s_k) &= (2 \cdot \sigma(o_1) - 0.5 + p_1) \cdot s_k \quad &\text{(center $y$ coordinate)}, \\
\phi_2(o_2, p_2, s_k) &= \left ( 2 \cdot \sigma(o_2) \right )^2 \cdot p_2 \cdot s_k \quad &\text{(width)}, \\
\phi_3(o_3, p_3, s_k) &= \left ( 2 \cdot \sigma(o_3) \right )^2 \cdot p_3 \cdot s_k \quad &\text{(height)},
\end{aligned}
\]
By definition, we once again have that $s_k >0$ which directly implies that
$\phi_0, \phi_1$ are injective as compositions of injective functions.
For $\phi_2, \phi_3$ we observe that $\sigma(x) > 0 \: \forall \: x \in
\mathbb{R}$ which implies that $2 \cdot \sigma(o_2)$ and $2 \cdot
\sigma(o_3)$ are injective. Assuming $p_2, p_3 > 0$, it directly follows
that $\phi_2, \phi_3$ are injective as compositions of injective
functions.

\section{Further Details on the Experiments and Additional Results}
\label{sec:further_details_experiments}
\subsection{Additional Details on the Trained Models}
\label{ssec:additional_details_models}

\begin{table}[ht]
\centering
\caption{Optimal hyperparameters for the 64$\times$64 YOLOv3 training runs and resulting accuracy. $\text{mAP}_{0.5}$ denotes the Mean Average Precision at an IoU threshold of $0.5$ while $\text{mAP}_{0.5:0.95}$ denotes the Mean Average Precision averaged over IoU thresholds between $0.5$ and $0.95$ in increments of $0.05$.}
\label{tab:yolov3_64_parameters}
\begin{tabular}{lcc}
\toprule
& \textbf{\makecell{YOLOv3-tiny, \\ 64$\times$64}} & \textbf{\makecell{YOLOv3-tiny-maxpool, \\ 64$\times$64}} \\
\midrule
Dataset & LARD & LARD \\
Weight Decay & $10^{-3}$ & $10^{-3}$ \\
Learning Rate & $5 \cdot 10^{-3}$ & $5 \cdot 10^{-3}$ \\
Epochs & 400 & 800 \\
Left-Right Flipping Probability & 0.5 & 0.5 \\
Mosaic Probability & 0.5 & 0.5 \\
MixUp Probability & 0.5 & 0.5 \\
\midrule
$\text{mAP}_{0.5}$ & 86.59\% & 86.88\% \\
$\text{mAP}_{0.5:0.95}$ & 40.90\% & 41.67\% \\
\bottomrule
\end{tabular}
\end{table}

\begin{table}[ht]
\centering
\caption{Optimal hyperparameters for the 128$\times$128 YOLOv3 training runs and resulting accuracy. $\text{mAP}_{0.5}$ denotes the Mean Average Precision at an IoU threshold of $0.5$ while $\text{mAP}_{0.5:0.95}$ denotes the Mean Average Precision averaged over IoU thresholds between $0.5$ and $0.95$ in increments of $0.05$.}
\label{tab:yolov3_128_parameters}
\begin{tabular}{lcc}
\toprule
& \textbf{\makecell{YOLOv3-tiny, \\ 128$\times$128}} & \textbf{\makecell{YOLOv3-tiny, \\ 128$\times$128}} \\
\midrule
Dataset & LARD & COCO \\
Weight Decay & $10^{-4}$ & $5 \cdot 10^{-4}$ \\
Learning Rate & $5 \cdot 10^{-3}$ & $10^{-2}$ \\
Epochs & 2000 & 2000 \\
Left-Right Flipping Probability & 0.5 & 0.5 \\
Mosaic Probability & 0.5 & 0.5 \\
MixUp Probability & 0.5 & 0.5 \\
\midrule
$\text{mAP}_{0.5}$ & 98.78\% &  44.99\% \\
$\text{mAP}_{0.5:0.95}$ & 71.24\% & 26.56\% \\
\bottomrule
\end{tabular}
\end{table}

\begin{table}[ht]
\centering
\setlength{\tabcolsep}{1.2mm}
\caption{YOLOv3 results for motion blur perturbations with angles of $45,
90$ and $135$ degrees and different epsilon values. The numbers of robust (R),
non-robust (NR) and timeout (T) cases are shown together with the mean verification time
in seconds. The timeout was set to $1800$ seconds.}
\label{tab:yolov3_more_motionblur_results}
\begin{tabular}{lccccccccccccc}
\toprule
\multirow{2}{*}{\textbf{Model}} & \multirow{2}{*}{\textbf{$\epsilon$}} & \multicolumn{4}{c}{Motion Blur ($45^\circ$)} & \multicolumn{4}{c}{Motion Blur ($90^\circ$)} & \multicolumn{4}{c}{Motion Blur ($135^\circ$)} \\
\cmidrule(lr){3-6} \cmidrule(lr){7-10} \cmidrule(lr){11-14}
 &  & \textbf{R} & \textbf{NR} & \textbf{T} & \textbf{Time} & \textbf{R} & \textbf{NR} & \textbf{T} & \textbf{Time} & \textbf{R} & \textbf{NR} & \textbf{T} & \textbf{Time} \\
\midrule
\multirow{7}{*}{\makecell{LARD\\ $64 \times 64$}} & 0.01 & 50 & 0 & 0 & 3.15 & 50 & 0 & 0 & 3.15 & 50 & 0 & 0 & 3.16 \\
 & 0.05 & 50 & 0 & 0 & 3.31 & 50 & 0 & 0 & 3.30 & 50 & 0 & 0 & 3.32 \\
 & 0.10 & 50 & 0 & 0 & 3.90 & 50 & 0 & 0 & 3.72 & 50 & 0 & 0 & 4.00 \\
 & 0.30 & 50 & 0 & 0 & 17.97 & 50 & 0 & 0 & 15.42 & 50 & 0 & 0 & 18.28 \\
 & 0.50 & 50 & 0 & 0 & 34.83 & 49 & 1 & 0 & 28.24 & 49 & 1 & 0 & 33.59 \\
 & 0.80 & 48 & 2 & 0 & 71.10 & 48 & 2 & 0 & 58.78 & 46 & 4 & 0 & 69.32 \\
 & 1.00 & 47 & 3 & 0 & 100.88 & 46 & 4 & 0 & 87.15 & 43 & 7 & 0 & 91.41 \\
\midrule
\multirow{7}{*}{\makecell{LARD\\ $128 \times 128$}} & 0.01 & 50 & 0 & 0 & 6.70 & 50 & 0 & 0 & 6.33 & 50 & 0 & 0 & 5.76 \\
 & 0.05 & 50 & 0 & 0 & 7.96 & 50 & 0 & 0 & 7.62 & 50 & 0 & 0 & 7.27 \\
 & 0.10 & 50 & 0 & 0 & 11.54 & 50 & 0 & 0 & 9.87 & 50 & 0 & 0 & 10.53 \\
 & 0.30 & 49 & 1 & 0 & 78.03 & 50 & 0 & 0 & 68.96 & 50 & 0 & 0 & 77.64 \\
 & 0.50 & 49 & 1 & 0 & 159.30 & 50 & 0 & 0 & 140.51 & 50 & 0 & 0 & 162.31 \\
 & 0.80 & 49 & 1 & 0 & 324.29 & 45 & 5 & 0 & 252.23 & 48 & 2 & 0 & 326.73 \\
 & 1.00 & 48 & 2 & 0 & 450.14 & 44 & 6 & 0 & 368.42 & 46 & 4 & 0 & 451.50 \\
\midrule
\multirow{7}{*}{\makecell{COCO\\ $128 \times 128$}} & 0.01 & 50 & 0 & 0 & 7.53 & 50 & 0 & 0 & 7.09 & 50 & 0 & 0 & 7.31 \\
 & 0.05 & 50 & 0 & 0 & 9.16 & 50 & 0 & 0 & 8.99 & 50 & 0 & 0 & 8.79 \\
 & 0.10 & 50 & 0 & 0 & 16.85 & 49 & 1 & 0 & 13.81 & 50 & 0 & 0 & 16.00 \\
 & 0.30 & 48 & 2 & 0 & 98.59 & 49 & 1 & 0 & 85.49 & 49 & 1 & 0 & 98.71 \\
 & 0.50 & 46 & 4 & 0 & 193.46 & 48 & 2 & 0 & 172.18 & 45 & 5 & 0 & 185.92 \\
 & 0.80 & 33 & 17 & 0 & 268.18 & 33 & 17 & 0 & 236.34 & 37 & 13 & 0 & 293.91 \\
 & 1.00 & 30 & 20 & 0 & 356.31 & 31 & 19 & 0 & 327.65 & 31 & 19 & 0 & 352.60 \\
\bottomrule
\end{tabular}
\end{table}
We provide additional information on the self-trained models in this
section. The YOLOv3 models are YOLOv3-tiny models which follow the
architecture described in the original paper~\cite{RedmonFarhadi18}, we
make use of the Ultralytics YOLOv3 repository to train these
models~\cite{YOLOv3Ultralytics}. We adapt the models to the verification
task by replacing the MaxPool with AvgPool layers. A more detailed
discussion and evaluation on this can be found in Appendix
\ref{ssec:ablation_study_pooling}. We employ an IoU threshold
$\tau_{\mathrm{iou}}=0.5$ and a class threshold of $0.15$ during training.

For the LARD dataset, we train models at a $64 \times 64$ and a $128
\times 128$ resolution. Since $64 \times 64$ is a relatively small
resolution for an object detection task and the LARD dataset contains a
large number of images where the runway is far away from the plane, we need
to crop the images such that the runway is at least $10$ pixels in size.
The YOLOv3-tiny model on the $64 \times 64$ inputs has $8,666,692$
trainable parameters while the $128 \times 128$ model has $8,669,876$
trainable parameters. Both models possess two prediction heads at different
scales, one which operates at a stride of $16$ and one which operates at a
stride of $32$. We use the autoanchor functionality to find suitable anchor
boxes for the LARD dataset. For the $64 \times 64$ model we use $(13, 13),
(21, 14), (15, 22)$ as the anchor boxes for the head with a stride of $16$
and $(33, 20), (22, 33), (43, 39)$ as the anchors for the head with a
stride of $32$. For the model trained at the higher $128 \times 128$
resolution we obtain $(13, 13), (15, 23), (25, 17)$ as the boxes for the
head with stride $16$ and $(27, 32), (50, 35), (80, 59)$ for the head with
a stride of $32$.

For the COCO dataset, we train a model at a $128
\times 128$ resolution. We preprocess the dataset to produce crops which only contain a single object since we focus on single-object detection. This may result in multiple images being generated from a single base image if it contains multiple separable objects. In the case of objects which overlap with others and therefore cannot be separated, we discard the object and move on to the next one to attempt cropping there.
Since $128 \times 128$ is a small image size compared to the original size of the images, we crop the images such that the object is at least $10$ pixels in size. The YOLOv3-tiny model on the $128 \times 128$ inputs has $8,852,366$
trainable parameters. The model has two prediction heads at different
scales, one which operates at a stride of $16$ and one which operates at a
stride of $32$. We use the autoanchor functionality to find suitable anchor
boxes for our input size and find that using boxes shaped $(11, 11), (16, 27), (39, 36)$ is optimal for the
head with stride $16$ and $(49, 97), (97, 53), (104, 103)$ are the optimal shapes for the head with
a stride of $32$.

All models are trained using a Stochastic Gradient Descent optimiser with
an initial learning rate of $0.01$ which is decayed using a Cosine
Annealing learning rate scheduler. We tune the number of epochs, the data
augmentation strategies, the learning rate and the weight decay during our
experiments. The optimal hyperparameters and the performance that we obtain
are shown in Table~\ref{tab:yolov3_64_parameters} for the models trained at a $64 \times 64$ resolution and in Table~\ref{tab:yolov3_128_parameters} for those trained at a $128 \times 128$ resolution. The test
accuracies which are provided are obtained on the synthetic LARD test
dataset for the LARD models and the COCO validation dataset for the COCO model.

\subsection{Motion Blur Perturbations With Different Angles on YOLOv3}
\label{ssec:motionblur_results_different_angles}
For the sake of completeness, we present the verification results on
YOLOv3-tiny models using motion blur perturbations with varying blurring
angles in Table~\ref{tab:yolov3_more_motionblur_results}. We generally
observe similar tendencies in terms of certified robustness and
verification times across the different blurring angles as we do for the
$0^\circ$ angle presented in the main text. Notably, the model trained on
the COCO dataset consistently exhibits a steeper drop in certified robustness
at higher perturbation budgets ($\epsilon \geq 0.80$) across all blur angles
compared to the LARD models, confirming its higher vulnerability to this
specific perturbation compared to the LARD dataset.

\subsection{Ablation Study on MaxPool vs. AvgPool Pooling}
\label{ssec:ablation_study_pooling}
\begin{table}[!ht]
\centering
\setlength{\tabcolsep}{1.8mm}
\caption{Ablation Study comparing YOLOv3-tiny models with MaxPool vs. AvgPool pooling layers trained on the LARD dataset at a 64$\times$64 resolution. We show the number of robust (R), non-robust (NR) and timeout (T) cases together with the mean verification time in seconds.}
\label{tab:ablation_study_pooling}
\adjustbox{max width=0.82\textwidth, keepaspectratio}{
\begin{tabular}{lccccccccc}
\toprule
\multirow{2}{*}{\textbf{Perturbation}} & \multirow{2}{*}{\textbf{$\epsilon$}} & \multicolumn{4}{c}{AvgPool} & \multicolumn{4}{c}{MaxPool}\\
\cmidrule(lr){3-6} \cmidrule(lr){7-10}
 & & \textbf{R} & \textbf{NR} & \textbf{T} & \textbf{Time} & \textbf{R} & \textbf{NR} & \textbf{T} & \textbf{Time} \\
\midrule
\multirow{7}{*}{Brightness} & 0.01 & 50 & 0 & 0 & 3.35 & 50 & 0 & 0 & 112.67 \\
 & 0.05 & 50 & 0 & 0 & 9.72 & 48 & 0 & 2 & 556.30 \\
 & 0.10 & 50 & 0 & 0 & 20.45 & 41 & 0 & 9 & 969.82 \\
 & 0.30 & 50 & 0 & 0 & 56.26 & 15 & 2 & 33 & 1602.32 \\
 & 0.50 & 47 & 3 & 0 & 89.58 & 0 & 5 & 45 & 1620.41 \\
 & 0.80 & 36 & 14 & 0 & 105.72 & 0 & 16 & 34 & 1225.74 \\
 & 1.00 & 28 & 22 & 0 & 103.87 & 0 & 23 & 27 & 974.65 \\
\midrule
\multirow{7}{*}{Contrast} & 0.01 & 50 & 0 & 0 & 3.28 & 50 & 0 & 0 & 215.24 \\
 & 0.05 & 50 & 0 & 0 & 8.06 & 44 & 0 & 6 & 866.93 \\
 & 0.10 & 50 & 0 & 0 & 15.82 & 30 & 0 & 20 & 1229.32 \\
 & 0.30 & 50 & 0 & 0 & 39.25 & 6 & 0 & 44 & 1750.99 \\
 & 0.50 & 49 & 1 & 0 & 58.63 & 0 & 1 & 49 & 1764.01 \\
 & 0.80 & 49 & 1 & 0 & 114.87 & 0 & 4 & 46 & 1656.06 \\
 & 1.00 & 0 & 50 & 0 & 43.65 & 0 & 45 & 5 & 182.96 \\
\midrule
\multirow{7}{*}{Motion Blur $0^\circ$} & 0.01 & 50 & 0 & 0 & 3.15 & 50 & 0 & 0 & 12.56 \\
 & 0.05 & 50 & 0 & 0 & 3.30 & 50 & 0 & 0 & 53.53 \\
 & 0.10 & 50 & 0 & 0 & 3.83 & 50 & 0 & 0 & 110.31 \\
 & 0.30 & 50 & 0 & 0 & 16.89 & 50 & 0 & 0 & 347.36 \\
 & 0.50 & 50 & 0 & 0 & 31.91 & 49 & 0 & 1 & 636.65 \\
 & 0.80 & 49 & 1 & 0 & 68.61 & 44 & 1 & 5 & 1150.29 \\
 & 1.00 & 45 & 5 & 0 & 91.45 & 29 & 1 & 20 & 1454.13 \\
\midrule
\multirow{7}{*}{Motion Blur $45^\circ$} & 0.01 & 50 & 0 & 0 & 3.15 & 50 & 0 & 0 & 13.34 \\
 & 0.05 & 50 & 0 & 0 & 3.31 & 50 & 0 & 0 & 58.30 \\
 & 0.10 & 50 & 0 & 0 & 3.90 & 50 & 0 & 0 & 118.77 \\
 & 0.30 & 50 & 0 & 0 & 17.97 & 50 & 0 & 0 & 384.12 \\
 & 0.50 & 50 & 0 & 0 & 34.83 & 49 & 0 & 1 & 713.62 \\
 & 0.80 & 48 & 2 & 0 & 71.10 & 38 & 4 & 8 & 1195.97 \\
 & 1.00 & 47 & 3 & 0 & 100.88 & 21 & 10 & 19 & 1241.84 \\
\midrule
\multirow{7}{*}{Motion Blur $90^\circ$} & 0.01 & 50 & 0 & 0 & 3.15 & 50 & 0 & 0 & 8.34 \\
 & 0.05 & 50 & 0 & 0 & 3.30 & 50 & 0 & 0 & 36.50 \\
 & 0.10 & 50 & 0 & 0 & 3.72 & 50 & 0 & 0 & 73.40 \\
 & 0.30 & 50 & 0 & 0 & 15.42 & 49 & 1 & 0 & 224.68 \\
 & 0.50 & 49 & 1 & 0 & 28.24 & 49 & 1 & 0 & 420.46 \\
 & 0.80 & 48 & 2 & 0 & 58.78 & 44 & 5 & 1 & 772.38 \\
 & 1.00 & 46 & 4 & 0 & 87.15 & 39 & 8 & 3 & 955.88 \\
\midrule
\multirow{7}{*}{Motion Blur $135^\circ$} & 0.01 & 50 & 0 & 0 & 3.16 & 50 & 0 & 0 & 13.61 \\
 & 0.05 & 50 & 0 & 0 & 3.32 & 50 & 0 & 0 & 58.36 \\
 & 0.10 & 50 & 0 & 0 & 4.00 & 50 & 0 & 0 & 117.14 \\
 & 0.30 & 50 & 0 & 0 & 18.28 & 50 & 0 & 0 & 368.98 \\
 & 0.50 & 49 & 1 & 0 & 33.59 & 48 & 0 & 2 & 731.39 \\
 & 0.80 & 46 & 4 & 0 & 69.32 & 43 & 3 & 4 & 1210.54 \\
 & 1.00 & 43 & 7 & 0 & 91.41 & 27 & 6 & 17 & 1414.50 \\
\bottomrule
\end{tabular}
}
\end{table}

\begin{table}[!ht]
\centering
\caption{Comparison of verification performance between different bounding
methods that have been integrated into our framework on SSD and YOLOv2. We
show the verification time, number of explored branches and branching depth
until resolution for all queries. The percentage gain of our method over that
of Cohen et al.~\cite{Cohen+24} in terms of time, \#branches and depth is denoted as
$\Delta_t$, $\Delta_b$ and $\Delta_d$, respectively. }
\label{tab:comparison_iou_bounding_methods}
\adjustbox{max width=\textwidth}{
\begin{tabular}{c c c r c c r c c r c c}
\toprule
 & & & \multicolumn{3}{c}{Ours} & \multicolumn{3}{c}{Cohen et al.~\cite{Cohen+24}} & \multicolumn{3}{c}{Gain (\%)} \\
\cmidrule(lr){4-6} \cmidrule(lr){7-9} \cmidrule(lr){10-12}
Model & Perturbation & $\epsilon$ & \multicolumn{1}{c}{Time} & \#Branches & Depth & \multicolumn{1}{c}{Time} & \#Branches & Depth & \multicolumn{1}{c}{$\Delta_t$} & \multicolumn{1}{c}{$\Delta_b$} & \multicolumn{1}{c}{$\Delta_d$} \\
\midrule
\multirow{16}{*}{SSD} & \multirow{8}{*}{Brightness} & 0.01 & 29.06 & 66 & 8 & 29.99 & 74 & 12 & 3.1 & 10.8 & 33.3 \\
 &  & 0.02 & 114.12 & 370 & 103 & 114.45 & 378 & 104 & 0.3 & 2.1 & 1.0 \\
 &  & 0.05 & 404.54 & 1192 & 179 & 389.59 & 1236 & 179 & -3.8 & 3.6 & 0.0 \\
 &  & 0.1 & 731.41 & 2027 & 201 & 738.81 & 2103 & 202 & 1.0 & 3.6 & 0.5 \\
 &  & 0.3 & 458.21 & 1208 & 57 & 458.51 & 1212 & 58 & 0.1 & 0.3 & 1.7 \\
 &  & 0.5 & 221.71 & 622 & 25 & 221.70 & 622 & 25 & -0.0 & 0.0 & 0.0 \\
 &  & 0.8 & 10.62 & 59 & 9 & 10.65 & 59 & 9 & - & - & - \\
 &  & 1 & 5.79 & 55 & 5 & 5.49 & 55 & 5 & - & - & - \\
\cmidrule(lr){2-12}
 & \multirow{8}{*}{Contrast} & 0.01 & 24.21 & 54 & 2 & 24.81 & 58 & 4 & 2.4 & 6.9 & 50.0 \\
 &  & 0.02 & 29.68 & 72 & 11 & 31.37 & 84 & 15 & 5.4 & 14.3 & 26.7 \\
 &  & 0.05 & 171.49 & 550 & 135 & 163.62 & 562 & 137 & -4.8 & 2.1 & 1.5 \\
 &  & 0.1 & 359.41 & 1055 & 171 & 361.96 & 1088 & 173 & 0.7 & 3.0 & 1.2 \\
 &  & 0.3 & 885.51 & 2324 & 185 & 887.91 & 2362 & 186 & 0.3 & 1.6 & 0.5 \\
 &  & 0.5 & 743.04 & 1904 & 114 & 742.45 & 1918 & 116 & -0.1 & 0.7 & 1.7 \\
 &  & 0.8 & 3.78 & 53 & 2 & 3.78 & 53 & 2 & - & - & - \\
 &  & 1 & 3.90 & 53 & 2 & 3.77 & 53 & 2 & - & - & - \\
\midrule
\multirow{16}{*}{YOLOv2} & \multirow{8}{*}{Brightness} & 0.01 & 3.69 & 58 & 4 & 3.65 & 58 & 4 & -1.1 & 0.0 & 0.0 \\
 &  & 0.02 & 5.15 & 94 & 18 & 5.11 & 94 & 18 & -0.8 & 0.0 & 0.0 \\
 &  & 0.05 & 12.97 & 308 & 79 & 12.94 & 312 & 81 & -0.2 & 1.3 & 2.5 \\
 &  & 0.1 & 23.81 & 627 & 135 & 23.79 & 639 & 137 & -0.1 & 1.9 & 1.5 \\
 &  & 0.3 & 39.40 & 1096 & 163 & 38.15 & 1102 & 163 & -3.3 & 0.5 & 0.0 \\
 &  & 0.5 & 9.99 & 284 & 41 & 9.81 & 284 & 41 & -1.8 & 0.0 & 0.0 \\
 &  & 0.8 & 1.21 & 55 & 4 & 1.22 & 55 & 4 & - & - & - \\
 &  & 1 & 0.81 & 50 & 0 & 0.82 & 50 & 0 & - & - & - \\
\cmidrule(lr){2-12}
 & \multirow{8}{*}{Contrast} & 0.01 & 3.23 & 52 & 1 & 3.23 & 52 & 1 & 0.0 & 0.0 & 0.0 \\
 &  & 0.02 & 3.53 & 56 & 2 & 3.50 & 56 & 2 & -0.9 & 0.0 & 0.0 \\
 &  & 0.05 & 4.86 & 88 & 14 & 4.87 & 88 & 14 & 0.2 & 0.0 & 0.0 \\
 &  & 0.1 & 9.23 & 204 & 53 & 9.08 & 204 & 53 & -1.7 & 0.0 & 0.0 \\
 &  & 0.3 & 33.36 & 899 & 171 & 32.81 & 896 & 170 & -1.7 & -0.3 & -0.6 \\
 &  & 0.5 & 35.86 & 940 & 178 & 35.38 & 947 & 179 & -1.4 & 0.7 & 0.6 \\
 &  & 0.8 & 2.52 & 74 & 16 & 2.50 & 74 & 16 & - & - & - \\
 &  & 1 & 2.36 & 71 & 14 & 2.35 & 71 & 14 & - & - & - \\
\bottomrule
\end{tabular}
}
\end{table}
As noted before, a key issue in the verification of object detection models
such as YOLOv3 is the fact that the architectures heavily rely on
downsampling inputs using MaxPool layers. The MaxPool layer is piece-wise
linear, bound propagation frameworks therefore need to employ convex
relaxations in order to model its behaviour. In our object detection
models, we replace the MaxPool with AvgPool layers and find that the
resulting networks yield comparable standard performance. However, since
the AvgPool function is linear and can be represented exactly in bound
propagation frameworks, we expect that the verification of those networks
is significantly more efficient than that of MaxPool-based models. To
verify this claim, we train a YOLOv3-tiny employing MaxPool instead of
AvgPool layers on the LARD dataset at a $64 \times 64$ resolution. The AvgPool model achieves an $\text{mAP}_{0.5:0.95}$ of $40.9\%$ and an
$\text{mAP}_{0.5}$ of $86.59\%$. Meanwhile, the MaxPool model achieves a
slightly higher $\text{mAP}_{0.5:0.95}$ of $41.67\%$ and a comparable
$\text{mAP}_{0.5}$ of $86.88\%$. Table~\ref{tab:ablation_study_pooling}
compares the verification of both models on a random subset of $50$
correctly classified images from the LARD test dataset. It is evident that
the AvgPool-based model is significantly easier to verify than the MaxPool
model. Verification runtimes are much lower for the AvgPool model and fewer
timeouts are encountered as a consequence. Verification times for the
MaxPool model are more than one order of magnitude higher except for very
small perturbation sizes.

\section{Complete Verification Performance}
\label{sec:complete_verification_performance}
In this section, we theorise about the performance gains achieved by our
optimal bounding method compared to the baseline when performing complete
verification as shown in Table~\ref{tab:comparison_iou_bounding_methods}.
While in incomplete verification tighter bounds often come at the cost of
longer verification times, in complete verification we would ideally expect
that tighter bounds allow the verifier to avoid exploring certain nodes in
the branch-and-bound (BaB) procedure, thereby improving overall efficiency.

However, it is important to recognise that tighter bounds do not always
guarantee faster complete verification. The computational
overhead introduced by tighter bounding can offset or even outweigh the
benefits of pruning nodes.

Within our object detection (OD) verification framework, when integrated
into a complete verification setting, we can fix the number of candidate
boxes we would consider, and if that number is exceeded our procedure
directly moves on to the splitting step. Under this assumption, once
bounds are propagated through the network, the OD verification-specific
computation takes constant time, independent of the number of anchor boxes.

To understand the time gains, we first define the following variables:  
\begin{itemize}
    \item $c_{\mathrm{opt}}$: time per call to our optimal OD bounding method,  
    \item $c_{\mathrm{base}}$: time per call to the baseline bounding method, with $c_{\mathrm{opt}} > c_{\mathrm{base}}$ (as tighter methods often incur higher per-call cost),  
    \item $c_{\mathrm{prop}}$: time for bound propagation before reaching the OD verification step,  
    \item $n$: total number of nodes explored in the BaB procedure by the baseline,  
    \item $s$: number of nodes avoided (pruned) by using our tighter bounds.  
\end{itemize}
We can approximate the total runtime of the baseline as:
\[
t_{\mathrm{base}} \approx n \cdot (c_{\mathrm{base}} + c_{\mathrm{prop}}),
\]
and the total runtime of the optimal method as:
\[
t_{\mathrm{opt}} \approx (n - s) \cdot (c_{\mathrm{opt}} + c_{\mathrm{prop}}).
\]
Defining the per-call overhead of the tighter bounds as $\Delta c := c_{\mathrm{opt}} - c_{\mathrm{base}}$, the total time saved can be expressed as:
\[
\Delta t = t_{\mathrm{base}} - t_{\mathrm{opt}} = -n \Delta c + s (c_{\mathrm{opt}} + c_{\mathrm{prop}}).
\]
This formulation reveals that if $c_{\mathrm{prop}} \gg \Delta c$, even a
small number of pruned nodes ($s$) can yield substantial runtime savings.
On the other hand, if $c_{\mathrm{prop}}$ is low, the overhead term $-n
\Delta c$ may dominate, potentially resulting in net slowdowns or only
marginal gains. There are also intermediate cases where the number of
pruned nodes, the overhead, and the propagation cost all balance each other
out, making patterns in the performance changes less clear.

\section{Proof of Theorem~\ref{th:iou_is_maximum}}
\label{sec:proof_iou_is_maximum}
Before presenting the proof for Theorem~\ref{th:iou_is_maximum}, we provide the analysis of
$\frac{\partial \text{IoU}}{\partial z_k}$ on the border of constraints,
since points where the gradient is 0 would have to be considered. We recall
Theorem~\ref{th:iou_is_maximum} which states that \textit{The IoU function
is maximum for a point obtained from within the set $C_s$}.
\begin{proof}
Let $M=(z^*_0,z^*_1,z^*_2,z^*_3)$ be an extreme of the function. It defines
on the $(z_0;z_2)$ plane the $M_x=(z^*_0,z^*_2)$ point, and on the
$(z_1;z_3)$ plane the $M_y=(z^*_1,z^*_3)$ point. Here we write $P_x := P^c_x \cup P^{\text{int}}_x \cup P^{gt}_x$ and
$P_y := P^c_y \cup P^{\text{int}}_y \cup P^{gt}_y$ for the candidate sets of
each plane. We observe that $M_x \in
P_x$ or that there is an $M'_x \in P_x$ that has equal IoU value. The case
for $M_y$ is analogous.\\

First, assume that $M_x \notin P_x$. Suppose $M_x$ lies within the interior
of the constrained region of the $(z_0;z_2)$ plane (which is defined by the
first and third constraints on the maximisation problem). We know that at
least one of the components is different from the ground truth, as
$(g_0$,$g_2) \in P_x$. Let us assume it's the first one. Then either
$z^*_0>g_0$, or $z^*_0<g_0$. If $z^*_0>g_0$, we have that $\frac{\partial
\text{IoU}}{\partial z_0}(M)<0$. And since we are within the constrained
region, the point $(z^*_0+\epsilon,z^*_1,z^*_2,z^*_3)$ still satisfies the
constraints, and has a bigger/smaller $IoU$ value (depending on the sign of
$\epsilon$) contradicting that $(z^*_0,z^*_2)$ is an extreme. If
$z^*_0<g_0$, we have that $\frac{\partial \text{IoU}}{\partial z_0}(M)>0$,
and we can apply the same reasoning.\\

Thus, we know that $M_x$ lies on the border of the region. Since it's also
not in $P_x$, then the following hold: $z^*_0\neq g_0$, $z^*_2\neq g_2$
(because the intersection of the ground truth coordinate and the border is
considered in $P_x$) and $M_x$ is not a corner of the region. Since $M_x$
lies on the border, we can define $\alpha^i(z_0,z_1,z_3) =
(z_0,z_1,\alpha(z_0),z_3)$, where $\alpha$ is the parameterisation of the
border in the $(z_0;z_2)$ plane.
In an analysis we present after the proof, we show how $\frac{\partial \text{IoU}
\circ \alpha^i}{\partial z_0} (z_0,z_1,z_3)$ behaves. In short, we prove
that in intervals contained within the border and the intersection with
either $z_0$ or $z_2$ with their respective ground truths, the derivative
is either always $0$ or never $0$. In the first case, any point within a
segment will have the same value, and thus in particular will have the same
IoU value as in either the intersection of $z_0$ or $z_2$ with their
respective ground truth, or the end of the segment due to the intersection
with another constraint (\ie, a corner). And all those points lie within $P_x$.
Thus, if $M_x$ lies within one of those segments, we could build $M'_x
\in P_x$ by changing the $(z_0,z_2)$ coordinates with the end of the
segment, and it will have the same IoU.
If $\frac{\partial \text{IoU} \circ \alpha^i}{\partial z_0}
(z^*_0,z^*_1,z^*_3) \neq 0$, we have that $\text{IoU} \circ \alpha^i(z^*_0
+ \epsilon,z^*_1,z^*_3)$ is either greater or smaller than $\text{IoU}
\circ \alpha^i(z^*_0 ,z^*_1,z^*_3)$ depending on the partial derivative
sign and the sign of $\epsilon$ we choose. And we know that $\alpha^i(z^*_0
+ \epsilon ,z^*_1,z^*_3)$ satisfies the constraints because: since it is a
parameterisation of one constraint, that one will always be satisfied, and
since it is not a corner of the region, the other constraint will also be
satisfied with a sufficiently small $\epsilon$. Thus, $M$ was not
an extreme of the function, resulting in a contradiction. Notice that
$\frac{\partial \text{IoU} \circ \alpha^i}{\partial z_0}
(z^*_0,z^*_1,z^*_3)$ is well defined, since $\frac{\partial \text{IoU}
\circ \alpha^i}{\partial z_0} (z^*_0,z^*_1,z^*_3)$ is non-defined only when
$z_0=g_0$ or $\alpha^i(z_0,z_1,z_3)_2=g_2$ and we already ruled out
those cases.

So then we proved that $M_x \in P_x$ or that there is an $M'_x \in P_x$
that has equal IoU value. An analogous reasoning can be applied to prove
that $M_y \in P_y$.
\end{proof}

\subsection{Partial Derivatives}

The following are the partial derivatives of IoU($b,g$) where
$b=(z_0,z_1,z_2,z_3)$ and $g$ is taken as a fixed number, which are needed
for the proofs that are presented next. These were presented in
\cite{Cohen+24}, though we fix here some of the typos they had.

\begin{equation}
\frac{\partial \text{IoU}(b)}{\partial z_i} = \frac{\partial \text{IoU}(b)}{\partial \mathbf{a}(b)} \frac{\partial  \mathbf{a}(b)}{\partial z_i} + \frac{\partial \text{IoU}(b)}{\partial \mathbf{a}(\mathbf{i}(b,g))} \frac{\partial \mathbf{a}(\mathbf{i}(b,g))}{\partial z_i}
\end{equation}

\begin{equation}
    \frac{\partial \text{IoU}(b)}{\partial \mathbf{a}(b)} = -\frac{\mathbf{a}(\mathbf{i}(b,g))}{\mathbf{a}(b\cup g)^ 2}
\end{equation}

Where $\mathbf{a}(b\cup g) := \mathbf{a}(b) + \mathbf{a}(g) - \mathbf{a}(\mathbf{i}(b, g))$

\begin{equation}
    \frac{\partial \text{IoU}(b)}{\partial \mathbf{a}(\mathbf{i}(b,g))} = \frac{\mathbf{a}(b\cup g) +  \mathbf{a}(\mathbf{i}(b,g))}{\mathbf{a}(b\cup g)^2}
\end{equation}

Let $c_i=(1-i)$ and $k_i=(2-i)$

\begin{equation}
    \frac{\partial  \mathbf{a}(b)}{\partial z_{i=0,2}} = - c_i (z_3-z_1)
\end{equation}

\begin{equation}
    \frac{\partial  \mathbf{a}(b)}{\partial z_{i=1,3}} = - k_i (z_2-z_0)
\end{equation}

\begin{equation}
\frac{\partial \mathbf{a}(\mathbf{i}(b,g))}{\partial z_{i=0,2}}= 
\begin{cases} 
0 & \text{if }  c_i z_i< c_i g_i  \\
- c_i \cdot (y_{max} - y_{min}) & \text{if }  c_i z_i > c_i g_i
\end{cases}
\end{equation}

\begin{equation}
\frac{\partial \mathbf{a}(\mathbf{i}(b,g))}{\partial z_{i=1,3}}= 
\begin{cases} 
0 & \text{if }  k_i z_i< k_i g_i  \\
- k_i \cdot (x_{max} - x_{min}) & \text{if }  k_i z_i > k_i g_i
\end{cases}
\end{equation}

\noindent \textbf{For $i\in \{0,2\}$:} 

If $ c_i z_i< c_i g_i$
\begin{equation}
\frac{\partial \text{IoU}(b)}{\partial z_{i=0,2}} = c_i  \frac{\mathbf{a}(\mathbf{i}(b,g))}{\mathbf{a}(b\cup g)^ 2}   (z_3-z_1) 
\end{equation}
Thus the derivative has the same sign as $c_i$

If $ c_i z_i > c_i g_i$

\begin{align}
&\frac{\partial \text{IoU}(b)}{\partial z_{i=0,2}} \nonumber \\
= &c_i \frac{\mathbf{a}(\mathbf{i}(b,g))}{\mathbf{a}(b\cup g)^2} (z_3 - z_1) - c_i \frac{\mathbf{a}(b\cup g) + \mathbf{a}(\mathbf{i}(b,g))}{\mathbf{a}(b\cup g)^2} (y_{max} - y_{min}) \nonumber \\
= &\frac{c_i}{\mathbf{a}(b\cup g)^2} [\mathbf{a}(\mathbf{i}(b,g)) (z_3-z_1) - (\mathbf{a}(b) + \mathbf{a}(g)) (y_{max} - y_{min})]\nonumber \\
= &\frac{c_i}{\mathbf{a}(b\cup g)^2} [(x_{max}-x_{min})(y_{max}-y_{min})  (z_3-z_1) - (\mathbf{a}(b) + \mathbf{a}(g)) (y_{max} - y_{min})]\nonumber \\
= &\frac{c_i \cdot (y_{max}-y_{min})}{\mathbf{a}(b\cup g)^2} [(x_{max}-x_{min})  (z_3-z_1)- \mathbf{a}(b) - \mathbf{a}(g)]\nonumber \\
= &\frac{c_i \cdot (y_{max}-y_{min})}{\mathbf{a}(b\cup g)^2} [(x_{max}-x_{min})  (z_3-z_1) \quad - (z_2-z_0) \cdot (z_3-z_1)- \mathbf{a}(g)]\nonumber \\
= &\frac{c_i \cdot (y_{max}-y_{min})}{\mathbf{a}(b\cup g)^2} [(x_{max}-z_2 + z_0 - x_{min})  (z_3-z_1) - \mathbf{a}(g)]\nonumber \\
\end{align}

Note that $x_{max} - z_2 <0$ and $z_0 - x_{min} <0$. Thus the derivative has the opposite sign of $c_i$

\noindent \textbf{Analogously, for $i\in \{1,3\}$:}

If $ k_i z_i< k_i g_i$
\begin{equation}
\frac{\partial \text{IoU}(b)}{\partial z_{i=1,3}} = k_i  \frac{\mathbf{a}(\mathbf{i}(b,g))}{\mathbf{a}(b\cup g)^ 2}   (z_2-z_0) 
\end{equation}
The derivative has the same sign as $k_i$

If $ k_i z_i > k_i g_i$

\begin{align}
&\frac{\partial \text{IoU}(b)}{\partial z_{i=1,3}} \nonumber \\
= &k_i \frac{\mathbf{a}(\mathbf{i}(b,g))}{\mathbf{a}(b\cup g)^2}   (z_2 - z_0) - k_i \frac{\mathbf{a}(b\cup g) + \mathbf{a}(\mathbf{i}(b,g))}{\mathbf{a}(b\cup g)^2} (x_{max} - x_{min}) \nonumber \\
= &\frac{k_i \cdot (x_{max}-x_{min})}{\mathbf{a}(b\cup g)^2} [(y_{max}-z_3 + z_1 - y_{min})  (z_2-z_0) - \mathbf{a}(g)]\nonumber \\
\end{align}

Note that $y_{max} - z_3 <0$ and $z_1 - y_{min} <0$. Thus the derivative
has the opposite sign of $k_i$

This allows us to build the following table, where $+$ means the function
increases in that interval, $-$ decreases and $nd$ means it is
non-differentiable in that point.

\begin{figure}[h]
    \centering
    \begin{tabular}{|c|c|c|c|c|c|}
        \hline
        $z$ & $-\infty$ & $g_0$ & & $g_2$ & $\infty$ \\
        \hline
        $\frac{\partial \text{IoU}}{\partial z_0}$ & $+$ & $nd$ & $-$ & $-$ & $-$ \\
        \hline
        $\frac{\partial \text{IoU}}{\partial z_2}$ & $+$ & $+$ & $+$ & $nd$ & $-$ \\
        \hline
    \end{tabular}
    \quad
    \begin{tabular}{|c|c|c|c|c|c|}
        \hline
        $z$ & $-\infty$ & $g_1$ & & $g_3$ & $\infty$ \\
        \hline
        $\frac{\partial \text{IoU}}{\partial z_1}$ & $+$ & $nd$ & $-$ & $-$ & $-$ \\
        \hline
        $\frac{\partial \text{IoU}}{\partial z_3}$ & $+$ & $+$ & $+$ & $nd$ & $-$ \\
        \hline
    \end{tabular}
    \caption{Variation of the IoU function given its partial derivatives.}
\end{figure}

\subsection{Analysing $\frac{\partial \text{IoU}}{\partial z_k}$ on the border of constraints}
\label{ssec:iou_on_the_border_of_constraints}

We will consider the general case:

\begin{equation*}
\label{newconstraints}
\begin{gathered}
\max_{z_0,z_1,z_2,z_3} \text{IoU}((z_0,z_1,z_2,z_3),g)  \\
\begin{aligned}
\text{s.t.}\quad 
L^x_1 &\leq z_0 + z_2 \leq U^x_1 \ \text{(a)} \\
L^y_1 &\leq z_1 + z_3 \leq U^y_1 \ \text{(b)} \\
L^x_2 &\leq z_2 - z_0 \leq U^x_2 \ \text{(c)} \\
L^y_2 &\leq z_3 - z_1 \leq U^y_2 \ \text{(d)}
\end{aligned}
\end{gathered}
\end{equation*}

We want to find the points where $\frac{\partial \text{IoU} \circ
\alpha^i}{\partial z_k} = 0$, for $k\in \{0,1\}$ and $i\in\{a,b,c,d\}$.

For $i\in \{a,b\}$, we can generalise the transformation for the
parameterisation as follows. \textbf{ $f^i(z_j)=K-z_j$}, taking $K\in
\{L^x_1, U^x_1, L^y_1, U^y_1 \} $. We also have \textbf{ $f^i(z_j)=K+z_j$}
when $i \in \{c,d\}$, and taking $K\in \{L^x_2, U^x_2, L^y_2, U^y_2 \}$.

So we will do the analysis for the cases $i \in \{a,c\}$ since the other
cases are analogous.

\begin{enumerate}
\item \textbf{\boldmath{$f^i(z_0)=K-z_0$:}} establishes the following
parameterisation: $\alpha(z_0,z_1,z_3)= (z_0,z_1,K-z_0,z_3)$.

Applying the chain rule, we have

\begin{align}
&\nabla \text{IoU} (\alpha(z_0,z_1,z_3)) \nonumber \\
= &\nabla \text{IoU}(z_0,z_1,z_2,z_3)_{|\alpha(z_0,z_1,z_3)} \cdot D\alpha(z_0,z_1,z_3)_{|(z_0,z_1,z_3)}
\end{align}

with

\[
D\alpha(z_0,z_1,z_3)_{|(z_0,z_1,z_3)} =  \begin{pmatrix}
1 & 0 & 0\\
0 & 1 & 0 \\
-1 & 0 & 0 \\
0 & 0 & 1 \\
\end{pmatrix}
\]

Thus, we have:

\begin{align}
&\frac{\partial \text{IoU} \circ \alpha}{\partial z_0} \nonumber \\
= &(\frac{\partial \text{IoU}(b,g)}{\partial z_0}_{|\alpha(z_0,z_1,z_3)} - \frac{ \partial \text{IoU}(b,g)}{\partial z_2}_{|\alpha(z_0,z_1,z_3)})
\end{align}

In order to ease the analysis, we define $A_x$ and $B_x$ as follows:

\[
A_x :=  \frac{\mathbf{a}(\mathbf{i}(b,g))}{\mathbf{a}(b\cup g)^ 2}   (z_3-z_1)>0 
\]

\begin{align}
B_x &:= \frac{\mathbf{a}(\mathbf{i}(b,g))}{\mathbf{a}(b\cup g)^2} (z_3 - z_1) \nonumber \\
	& \quad - \frac{\mathbf{a}(b\cup g) + \mathbf{a}(\mathbf{i}(b,g))}{\mathbf{a}(b\cup g)^2} (y_{max} - y_{min}) \nonumber \\
& < 0
\end{align}

To study the value of the derivatives, we will separate in the following cases:

\begin{enumerate}
    \item \textbf{Case $z_0<g_0$ and $z_2= K - z_0 < g_2$}: 
    
    \begin{align*}
    &\frac{\partial \text{IoU}(b,g)}{\partial z_0}_{|\alpha(z_0,z_1,z_3)} - \frac{ \partial \text{IoU}(b,g)}{\partial z_2}_{|\alpha(z_0,z_1,z_3)}\\ 
    & = A_x -  (-B_x) = A_x + B_x \\
    &= 2 \cdot \frac{\mathbf{a}(\mathbf{i}(b,g))}{\mathbf{a}(b\cup g)^2}   (z_3 - z_1)\nonumber \\
    & - \frac{\mathbf{a}(b\cup g) + \mathbf{a}(\mathbf{i}(b,g))}{\mathbf{a}(b\cup g)^2} (y_{max} - y_{min}) \nonumber\\
    &= \frac{(y_{max}-y_{min})}{\mathbf{a}(b\cup g)^2} [(2x_{max}- (K - z_0) \nonumber \\
    	& \quad + z_0 - 2x_{min})  (z_3-z_1) - \mathbf{a}(g)]
    \end{align*}

    Which equals 0 iff:

\begin{align*}
	0 &= (2x_{max}- K + z_0 + z_0 - 2x_{min})  (z_3-z_1) \\
		& \quad- \mathbf{a}(g)\\
	\mathbf{a}(g) &= (2x_{max}- K + 2 z_0 - 2x_{min})  (z_3-z_1) \\ 
	\mathbf{a}(g) &= [2 (x_{max}- x_{min}) - K + 2 z_0 ] \cdot (z_3-z_1)\\ 
	\mathbf{a}(g) &= [2 (K - z_0 - g_0) - K + 2 z_0 ] \cdot (z_3-z_1)\\ 
	\mathbf{a}(g) &= (2 K- 2z_0- 2 g_0- K+2z_0) \cdot  (z_3-z_1)\\
	\mathbf{a}(g) &= (K- 2 g_0) \cdot  (z_3-z_1)
\end{align*}
It follows that if $(K - 2g_0) \neq 0$, we have that $(z_3-z_1) = \frac{\mathbf{a}(g)}{2 (K - 2g_0)}$.

If $(K - 2g_0) = 0$, then it would require $\mathbf{a}(g)=0$ which is a case we do not consider.

\item \textbf{Case $z_0<g_0$ and $z_2= K - z_0 > g_2$}: 
    
\begin{align*}
    &\frac{\partial \text{IoU}(b,g)}{\partial z_0}_{|\alpha(z_0,z_1,z_3)} - \frac{ \partial \text{IoU}(b,g)}{\partial z_2}_{|\alpha(z_0,z_1,z_3)}\\ 
    & = A_x -  (-A_x) = A_x + A_x \\
    & = 2 A_x >0
\end{align*}

\item \textbf{Case $z_0>g_0$ and $z_2= K - z_0 > g_2$}

\begin{align*}
    &\frac{\partial \text{IoU}(b,g)}{\partial z_0}_{|\alpha(z_0,z_1,z_3)} - \frac{\partial \text{IoU}(b,g)}{\partial z_2}_{|\alpha(z_0,z_1,z_3)}\\
    & = B_x -  (-A_x) =A_x + B_x \\
\end{align*}

 Bear in mind that this case is not the same as the first one, as $x_{max}$ and $x_{min}$ differ. Thus, $A_x + B_x = 0$ iff:

\begin{align*}
	0 &= (2x_{max}- K + z_0 + z_0 - 2x_{min})  (z_3-z_1) - \mathbf{a}(g)\\
	\mathbf{a}(g) &= (2x_{max}- K + 2 z_0 - 2x_{min})  (z_3-z_1) \\ 
	\mathbf{a}(g) &= [2 (x_{max}- x_{min}) - K + 2 z_0 ] \cdot (z_3-z_1)\\ 
	\mathbf{a}(g) &= [2 (g_2 - z_0) - K + 2 z_0 ] \cdot (z_3-z_1)\\ 
	\mathbf{a}(g) &= (2 g_2 - 2z_0- K+2z_0) \cdot  (z_3-z_1)\\
	\mathbf{a}(g) &= (2 g_2- K) \cdot  (z_3-z_1)\\
\end{align*}
It follows that if $(2 g_2- K) \neq 0$, we have that $(z_3-z_1) = \frac{\mathbf{a}(g)}{ (2 g_2- K)}$.

The reasoning is similar to that of Case a). Notice that this condition does not depend on $z_0$ or $z_2$.

\item \textbf{Case $z_0>g_0$ and $z_2= K - z_0 < g_2$}

\begin{align*}
    &\frac{\partial \text{IoU}(b,g)}{\partial z_0}_{|\alpha(z_0,z_1,z_3)} - \frac{ \partial \text{IoU}(b,g)}{\partial z_2}_{|\alpha(z_0,z_1,z_3)}\\ 
    & = B_x -  (-B_x) = 2 B_x <0 \\
\end{align*}

\end{enumerate}

\item \textbf{\boldmath{$f^i(z_0)=K+z_0$:}} establishes the following parameterisation: $\alpha(z_0,z_1,z_3)= (z_0,z_1,K+ z_0,z_3)$. So now we have:

\[D\alpha(z_0,z_1,z_3)_{|(z_0,z_1,z_3)} =  \begin{pmatrix}
1 & 0 & 0\\
0 & 1 & 0 \\
1 & 0 & 0 \\
0 & 0 & 1 \\
\end{pmatrix}\]

Thus, we have:

\begin{align}
& \frac{\partial \text{IoU} \circ \alpha}{\partial z_0} \nonumber \\
= &(\frac{\partial \text{IoU}(b,g)}{\partial z_0}_{|\alpha(z_0,z_1,z_3)} + \frac{ \partial \text{IoU}(b,g)}{\partial z_2}_{|\alpha(z_0,z_1,z_3)} )
\end{align}

Again, we are going to split the analysis in the following cases:

\begin{enumerate}
    \item \textbf{Case $z_0<g_0$ and $z_2= K + z_0 < g_2$}: 
    
    \begin{align*}
    &\frac{\partial \text{IoU}(b,g)}{\partial z_0}_{|\alpha(z_0,z_1,z_3)} + \frac{ \partial \text{IoU}(b,g)}{\partial z_2}_{|\alpha(z_0,z_1,z_3)}\\ 
    & = A_x +  (-B_x) = A_x - B_x \\
    &= \frac{\mathbf{a}(b\cup g) + \mathbf{a}(\mathbf{i}(b,g))}{\mathbf{a}(b\cup g)^2} (x_{max} - x_{min}) > 0\\
    \end{align*}

There are no candidate points to add.

\item \textbf{Case $z_0<g_0$ and $z_2= K + z_0 > g_2$}:

\begin{align*}
    &\frac{\partial \text{IoU}(b,g)}{\partial z_0}_{|\alpha(z_0,z_1,z_3)} + \frac{ \partial \text{IoU}(b,g)}{\partial z_2}_{|\alpha(z_0,z_1,z_3)}\\ 
    & = A_x +  (-A_x) = 0 \\
\end{align*}

Similar to what happens in Cases a) and c) of the other family of borders,
having the gradient be identically $0$ means that we can move along the
border while keeping the IoU value constant until we reach one of the other
candidate points. Hence, this case introduces no new candidate points.

\item \textbf{Case $z_0>g_0$ and $z_2= K + z_0 > g_2$}

\begin{align*}
    &\frac{\partial \text{IoU}(b,g)}{\partial z_0}_{|\alpha(z_0,z_1,z_3)} + \frac{ \partial \text{IoU}(b,g)}{\partial z_2}_{|\alpha(z_0,z_1,z_3)}\\ 
    & = B_x +  (-A_x) =B_x - A_x \\
    &= - \frac{\mathbf{a}(b\cup g) + \mathbf{a}(\mathbf{i}(b,g))}{\mathbf{a}(b\cup g)^2} (x_{max} - x_{min}) < 0
\end{align*}

The gradient is never $0$.

\item \textbf{Case $z_0>g_0$ and $z_2= K + z_0 < g_2$}

\begin{align*}
    &\frac{\partial \text{IoU}(b,g)}{\partial z_0}_{|\alpha(z_0,z_1,z_3)} + \frac{ \partial \text{IoU}(b,g)}{\partial z_2}_{|\alpha(z_0,z_1,z_3)}\\ 
    & = B_x +  (-B_x) = 0 \\
\end{align*}

The reasoning is similar to that of case b).

\end{enumerate}
\end{enumerate}

With these results, we can see that for any particular case, either the
gradient is always zero, or never zero.

\section{Proof of Theorem~\ref{th:verification_algorithm_correctness}}
\label{sec:proof_verification_algorithm_correctness}
We first provide details of our proposed method for finding minimum and
maximum IoU values as Algorithm~\ref{alg:iou_bounds}. The algorithm
iterates through all $169$ critical points presented in
Section~\ref{ssec:optimal_iou_ibp_bounds}. Note that this is the number of
the IoU function's critical points for any given predicted box and a ground
truth box; this number is independent of the model architecture and other
parameters. When iterating through the points, our algorithm checks the
following:
\begin{itemize}
\item Does the point satisfy the constraints (as some of the candidate
points may not satisfy a particular combination of constraints)?

\item Is the point valid, \ie does it define a valid box satisfying
$z_0<z_2$ and $z_1<z_3$?
\end{itemize}
If these conditions are satisfied by a point, its IoU with the ground truth
is calculated and the maximum and minimum IoU values are updated.

\begin{algorithm}
\caption{Finding minimum and maximum IoU Values}
\label{alg:iou_bounds}
\begin{algorithmic}
\State \textbf{Input:} Set of critical points $C$, Set of constraints $S$
\State \textbf{Output:} Minimum and maximum IoU values
\State Initialise lower bound $\underline{\mathit{IoU}} \gets 1$
\State Initialise upper bound $\overline{\mathit{IoU}} \gets 0$
\For{each critical point $c \in C$}
    \If{$c$ satisfies constraints $S$ and is \textbf{valid}}
        \State Compute IoU value at $c$, denoted as $\mathit{IoU}(c)$
        \If{$\mathit{IoU}(c) > \overline{\mathit{IoU}}$}
            \State $\overline{\mathit{IoU}} \gets \mathit{IoU}(c)$
        \EndIf
        \If{$\mathit{IoU}(c) < \underline{\mathit{IoU}}$}
            \State $\underline{\mathit{IoU}} \gets \mathit{IoU}(c)$
        \EndIf
    \EndIf
\EndFor
\State \textbf{Return} $\underline{\mathit{IoU}}$, $\overline{\mathit{IoU}}$
\end{algorithmic}
\end{algorithm}

Algorithm~\ref{alg:verification_updated} represents a more detailed version
of Algorithm~\ref{alg:verification} which is helpful to understand the
proof of correctness for Algorithm~\ref{alg:verification} that we present
in the following.
\begin{algorithm}[b!]
\caption{Verification for Object Detection Models (Updated)}
\label{alg:verification_updated}
\begin{algorithmic}[1]
\State \textbf{Input:} Neural network $N$, input constraints $\zeta_x$, correctness constraints $\zeta_y$
\State \textbf{Output:} \textit{ROBUST} or \textit{NONROBUST} or \textit{UNKNOWN}

\Function{Verify}{$N, \zeta_x, \zeta_y$}
    \State bounds $\gets$ \Call{Bound\_Propagation}{$\zeta_x, N$}
    \State IoUBounds, scoreBounds, predClass $\gets$\\
    \hspace*{\algorithmicindent} \quad \Call{GetHighestBox}{bounds}
    \If{IoUBounds.min $\geq \tau_{\mathrm{iou}}$ \textbf{and}\\
    \hspace*{\algorithmicindent} \quad scoreBounds.lower $\geq \tau_{\mathrm{class}}$ \textbf{and}\\
    \hspace*{\algorithmicindent} \quad predClass $=$ \textit{class}}
        \State \Return \textit{ROBUST}
    \ElsIf{IoUBounds.max $< \tau_{\mathrm{iou}}$ \textbf{or}\\
    \hspace*{\algorithmicindent} \quad scoreBounds.upper $< \tau_{\mathrm{class}}$ \textbf{or}\\
    \hspace*{\algorithmicindent} \quad \Call{disagree}{predClass}}
        \State \Return \textit{NONROBUST}
    \Else
        \State \Return \textit{UNKNOWN}
    \EndIf
\EndFunction
\vspace{2mm}
\Function{GetHighestBox}{bounds}
    \State classLowerBound $\gets -\infty$
    \State classUpperBound $\gets -\infty$
    \For{each box $boxId$ in \textit{bounds}}
        \State \Call{ProcessLogits}{boxId}
        \State classLowerBound $\gets$ \textbf{\textit{max}}(classLowerBound,\\
        \hspace*{\algorithmicindent} \quad \quad boxId.classBounds.lower)
        \State classUpperBound $\gets$ \textbf{\textit{max}}(classUpperBound,\\
        \hspace*{\algorithmicindent} \quad \quad boxId.classBounds.upper)
    \EndFor
    \State minIoU $\gets 1$
    \State maxIoU $\gets 0$
    \For{each box $boxId$ in \textit{bounds}}
        \If{boxId.classBounds.upper $\geq$\\
        \hspace*{\algorithmicindent} \quad \quad classLowerBound}
            \State lowIoU, uppIoU $\gets$\\
            \hspace*{\algorithmicindent} \quad \quad \quad \Call{IoUBounds}{boxId.offsetBounds}
            \State minIoU $\gets$ \textbf{\textit{min}}(minIoU, lowIoU)
            \State maxIoU $\gets$ \textbf{\textit{max}}(maxIoU, uppIoU)
        \EndIf
    \EndFor
    \State \Return (minIoU, maxIoU), (classLowerBound,\\
    \hspace*{\algorithmicindent} \quad classUpperBound)
\EndFunction
\end{algorithmic}
\end{algorithm}

NOTE: We can use vectorised operations to first find the candidate boxes
before applying the expensive IoU bounding. To avoid computing bounds for
the IoU function in cases where verification is unlikely to succeed, we can
check whether the number of candidate boxes exceeds a threshold at this
step. If this is the case, we can continue branching in our complete
verification framework without having to compute the IoU bounds.

We recall Theorem~\ref{th:verification_algorithm_correctness} which states that \textit{\ioucert{} is correct}.
\begin{proof}[Proof (by induction)]
We prove by induction on the number of boxes $n$ in the \texttt{bounds}
input that the function \texttt{GetHighestBox} correctly computes:
\begin{itemize}
    \item The maximum class score lower bound across all boxes.
    \item The maximum class score upper bound across all boxes.
    \item The minimum and maximum IoU bounds over the set of candidate boxes.
\end{itemize}
\textbf{Base Case ($n = 1$):}  
If there is only one box:
\begin{itemize}
    \item The first \texttt{for} loop sets \texttt{classLowerBound} and \texttt{classUpperBound} to that single box’s \texttt{classBounds.lower} and \texttt{classBounds.upper}, respectively.
    \item The second \texttt{for} loop checks if the box's \texttt{classBounds.upper} is at least as large as the maximum lower bound (which it is, since there is only one box), and computes its IoU bounds.
\end{itemize}
Thus, the single box is correctly identified as the only candidate, and the
IoU and class score bounds are trivially correct.

\textbf{Inductive Hypothesis:}  
Assume that for any set of $k$ boxes ($1 \leq k \leq n$), \texttt{GetHighestBox} correctly:
\begin{itemize}
    \item Identifies the maximum class score lower and upper bounds.
    \item Selects all and only the candidate boxes whose \texttt{classBounds.upper} is at least the maximum lower bound.
    \item Computes the correct minimum and maximum IoU values over these candidates.
\end{itemize}

\textbf{Inductive Step ($n+1$ boxes):}  
Consider adding one more box $b_{n+1}$ to the existing $n$ boxes.

\begin{itemize}
    \item \textit{First \texttt{for} loop (updating class bounds):}  
    By the inductive hypothesis, after processing the first $n$ boxes, we have the correct \texttt{classLowerBound} and \texttt{classUpperBound}. When processing $b_{n+1}$, we update these bounds only if $b_{n+1}$ has a higher lower or upper class bound. Thus, after the loop, we have the correct maximum values over all $n + 1$ boxes.
    
    \item \textit{Second \texttt{for} loop (identifying candidates and IoU bounds):}  
    We check whether $b_{n+1}$ satisfies $b_{n+1}.\texttt{classBounds.upper} \geq \texttt{classLowerBound}$
    \begin{itemize}
    \item If \textbf{yes}, then in the worst case, all other candidates score at their lower bounds $\leq \texttt{classLowerBound}$, while $b_{n+1}$ scores at its upper bound $\geq \texttt{classLowerBound}$. Therefore, it is possible that $b_{n+1}$ is the highest scoring box, and it must be included as a candidate.
    
    \item If \textbf{no}, then the box attaining the maximum lower bound scores at least \texttt{class\allowbreak Lower\allowbreak Bound}, while $b_{n+1}$ can reach at most a score below \texttt{class\allowbreak Lower\allowbreak Bound}, making it impossible for $b_{n+1}$ to be the highest scoring box. Thus, it is correctly excluded.
\end{itemize}
    
    For all candidates, we compute their IoU bounds, updating \texttt{minIoU} and \texttt{maxIoU}. By inductive assumption, the existing candidates’ bounds are correct, and adding $b_{n+1}$ only refines these if it’s a valid candidate.
\end{itemize}

\textbf{Conclusion:}  
By induction, for any number of boxes $n$, \texttt{GetHighestBox}:
\begin{itemize}
    \item Correctly computes the maximum class score lower and upper bounds.
    \item Correctly identifies the candidate boxes.
    \item Correctly computes the minimum and maximum IoU values over these candidates.
\end{itemize}

Therefore, the overall \texttt{Verify} function’s branching logic
(\textit{ROBUST} / \textit{NONROBUST} / \textit{UNKNOWN}) is based on correctly
computed bounds, and \ioucert{} is correct.
\end{proof}

\section{Proof: Optimality of the LeakyReLU Relaxation}
\label{sec:proof_optimality_leaky_relu_relaxation}
LeakyReLU activations are piece-wise linear like ReLU activations and therefore
preserve desirable properties for NN verification such as the fact that splitting the input
interval to a LeakyReLU neuron at $x=0$ yields two subproblems where the
relaxation error for that neuron is zero. While Xu et al.~\cite{Xu+20b} extend the
Reluplex verifier to LeakyReLUs, the work more relevant to us is that of
Mellouki et al.~\cite{MelloukiIbnKhedherElYacoubi23} which extends the zonohedron-based
$\text{AI}^2$ verifier to LeakyReLUs. Let $\text{LeakyReLU}(x) = \max
\{\alpha x, x\}$ with $\alpha \in [0, 1]$ be a LeakyReLU activation with
concrete input bounds $x \in [l, u]$. If $u<0$ or $l>0$ the activation
function is said to be stable. In the case where $u<0$ it can be
represented exactly by $\text{LeakyReLU}(x)=\alpha x$ while in the case
where $l>0$ we have that $\text{LeakyReLU}(x)=x$. However, if $l < 0 < u$,
the behaviour of the LeakyReLU function is piece-wise linear. We therefore
need to employ linear lower and upper bounding functions to be able to
represent its behaviour in the linear bound propagation framework usually
employed by neural network verifiers. Linear lower and upper bounding
functions $f_\text{lower}(x), f_\text{upper}(x)$ (see Figure~\ref{fig:leaky_relu_relaxation}) are given by
\begin{align}
f_\text{lower}(x) &= \tilde{\alpha} x, \quad \tilde{\alpha} \in [\alpha, 1] \\
f_\text{upper}(x) &= \frac{u-\alpha l}{u-l}x + \frac{(\alpha-1)lu}{u-l}.
\end{align}
The bounds $\tilde{\alpha} \in [\alpha, 1]$ follow from the fact that
$\tilde{\alpha} x$ must be a valid lower bound for both parts of the
LeakyReLU function, namely $\alpha x$ and $x$.
Mellouki et al.~\cite{MelloukiIbnKhedherElYacoubi23} simply set $\tilde{\alpha}=\alpha$,
however, this ignores the fact that the tightness of the obtained bounds
can be significantly improved by either setting $\tilde{\alpha}$ to
minimise the overapproximation area or by optimising its slope as has been
proposed for ReLU activations \cite{Xu+21}. This has been demonstrated to
accelerate verification in various
works~\cite{Zhang+18,Singh+19a,HashemiKouvarosLomuscio21}. We derive the
overapproximation area and the corresponding optimal setting for
$\tilde{\alpha}$ depending on the concrete input bounds in the following.
\begin{figure}[htbp]
\centering
\begin{tikzpicture}
    \def\alphaValue{0.1}
    \def\l{-1}
    \def\u{2}
    \def\alphatilde{0.5}

    \pgfmathsetmacro{\fUpperSlope}{((\u - \alphaValue*\l)/(\u - \l))}
    \pgfmathsetmacro{\fUpperIntercept}{((\alphaValue - 1)*\l*\u)/(\u - \l)}

    \begin{axis}[
        title={LeakyReLU Relaxation ($\alpha=\alphaValue, l=\l, u=\u$)},
        xlabel={$x$},
        ylabel={$y$},
        axis lines=middle, 
        xmin=-2.5, xmax=2.5,
        ymin=-1.0, ymax=2.5,
        legend style={at={(-0.03,1)},anchor=north west,
            nodes={scale=0.7, transform shape},
            legend image post style={mark=*}
        },
        declare function={
            leaky(\x) = (\x > 0 ? \x : \alphaValue*\x); 
        },
        declare function={
            fUpper(\x) = \fUpperSlope*\x + \fUpperIntercept;
        },
        declare function={
            fLower(\x) = \alphatilde*\x;
        }
    ]
    
    
    \addplot [
        name path=fUpper, 
        blue, 
        dashed, 
        thick,
        domain=\l:\u, 
        samples=2 
    ] {fUpper(x)}
    node[pos=0.1, above, font=\small, anchor=south west, xshift=-3em, yshift=0em] {$f_\text{upper}(x)$};
    \addlegendentry{$f_\text{upper}(x)$}

    \addplot [
        name path=fLower, 
        red, 
        dashed, 
        thick,
        domain=\l:\u, 
        samples=2 
    ] {fLower(x)}
    node[pos=0.8, above, font=\small, anchor=north east, xshift=0em,yshift=-4.5em,] {$f_\text{lower}(x), \tilde{\alpha}=0.5$};
    \addlegendentry{$f_\text{lower}(x), \tilde{\alpha}=0.5$}

    
    \addplot [
        name path=leakynegative, 
        thick, 
        black,
        domain=\l:0, 
        samples=2
    ] {leaky(x)};
    \addlegendentry{$\text{LeakyReLU}(x)$}
    
    \addplot [
        name path=leakypositive,
        thick, 
        black,
        domain=0:\u, 
        samples=2,
        forget plot,  
    ] {leaky(x)}
    node[pos=0.7, above, font=\small, anchor=south east] {$\text{LeakyReLU}(x)$};

    
    \addplot [
        blue, 
        fill opacity=0.1
    ] fill between [
        of=fUpper and leakynegative, 
        soft clip={domain=\l:\u} 
    ];
    \addlegendentry{Upper relaxation area}

    \addplot [
        red, 
        fill opacity=0.1
    ] fill between [
        of=leakynegative and fLower, 
        soft clip={domain=\l:\u} 
    ];
    \addlegendentry{Lower relaxation area}
    \addplot [
        red, 
        fill opacity=0.1
    ] fill between [
        of=leakypositive and fLower, 
        soft clip={domain=\l:\u} 
    ];
    
    
    \draw [dashed, gray] (axis cs:\l, \pgfkeysvalueof{/pgfplots/ymin}) 
        -- (axis cs:\l, \pgfkeysvalueof{/pgfplots/ymax}) 
        node[above, pos=1, font=\small] {$x=l$};
        
    \draw [dashed, gray] (axis cs:\u, \pgfkeysvalueof{/pgfplots/ymin}) 
        -- (axis cs:\u, \pgfkeysvalueof{/pgfplots/ymax})
        node[above, pos=1, font=\small] {$x=u$};

    \end{axis}
\end{tikzpicture}
\caption{Visualisation of the lower and upper LeakyReLU bounding functions
for a sample $\tilde{\alpha}=0.5$. The lower overapproximation area is
shaded in red, the upper overapproximation area in blue.}
\label{fig:leaky_relu_relaxation}
\end{figure}

We recall Theorem~\ref{th:optimal_leaky_relu_relaxation} which states that
\textit{for $\text{LeakyReLU}(x) = \max \{\alpha x, x\}$ with $x \in [l, u]$, the local relaxation error is minimised by setting
\begin{equation}
f_\text{lower}(x) = \begin{dcases}
\alpha x \quad &\text{if} \: u < |l|,\\
x \quad &\text{else.}
\end{dcases}
\end{equation}}

\begin{proof}
To find the $\tilde{\alpha}$ which minimises the relaxation
error, we first compute the relaxation error induced by the lower bounding
function $f_\text{lower}$ both for $x<0$ and $x\geq0$.
\paragraph{Relaxation Error for $x < 0$}
For $x<0$ we compute the area enclosed by the LeakyReLU function and the
x-axis as
\begin{equation}
a_1 = \frac{1}{2} l (\alpha l)
\end{equation}
and the area enclosed by the lower bound for the LeakyReLU function and the
x-axis as
\begin{equation}
a_2 = \frac{1}{2} l (\tilde{\alpha} l).
\end{equation}
The total relaxation error is the difference between area enclosed by the
relaxation and that enclosed by the actual function, \ie
\begin{equation}
e_{x<0}(\tilde{\alpha}) = a_2 - a_1 = \frac{1}{2} l^2 (\tilde{\alpha} - \alpha).
\end{equation}

\paragraph{Relaxation Error for $x > 0$}
For $x>0$ we obtain that the area enclosed by the LeakyReLU function and
the x-axis is
\begin{equation}
a_3 = \frac{1}{2} u^2
\end{equation}
while that for the relaxation is
\begin{equation}
a_4 = \frac{1}{2} u (\tilde{\alpha} u).
\end{equation}
The total relaxation error can then be computed as the difference between the
area enclosed by the function and the area enclosed by the lower bound, \ie
\begin{equation}
e_{x>0}(\tilde{\alpha}) = a_3 - a_4 = \frac{1}{2} u^2 (1 - \tilde{\alpha}).
\end{equation}

\paragraph{Total Relaxation Error and Optimal Slope}
The total relaxation error $e$ can be calculated as:
\begin{align}
e(\tilde{\alpha}) &= e_{x<0}(\tilde{\alpha}) + e_{x>0}(\tilde{\alpha}) \\
&= \frac{1}{2} l^2 (\tilde{\alpha} - \alpha) + \frac{1}{2} u^2 (1 - \tilde{\alpha}) \\
&= \frac{1}{2} \left ( \left ( \tilde{\alpha} - \alpha \right ) l^2 + \left ( 1-\tilde{\alpha} \right ) u^2 \right ) \\
&= \frac{1}{2} \left ( \tilde{\alpha} l^2 - \alpha l^2 + u^2 - \tilde{\alpha} u^2 \right ) \\
&= \frac{1}{2} (u^2 - \alpha l^2) + \frac{1}{2} (\tilde{\alpha} l^2 - \tilde{\alpha} u^2) \\
&= \frac{1}{2} (u^2 - \alpha l^2) + \frac{1}{2} \tilde{\alpha} (l^2 - u^2) \label{eq:total_relaxation_error}
\end{align}
To determine the relation between the error $e$ and the relaxation slope
$\tilde{\alpha}$, we can compute the partial derivative
\begin{equation}
\frac{\partial e(\tilde{\alpha})}{\partial \tilde{\alpha}} = \frac{1}{2} (l^2 - u^2)
\end{equation}
from which it is obvious that if $u^2 < l^2$, the error $e$ grows with
growing $\tilde{\alpha}$ while for $u^2 > l^2$ a smaller $\tilde{\alpha}$
leads to larger errors. The error is minimised by selecting
the smallest valid $\tilde{\alpha}$ if $u^2 < l^2$ and the largest valid
$\tilde{\alpha}$ otherwise. The condition can be simplified by taking
its root on both sides and using the fact that $l < 0 < u$ (since we
otherwise do not employ relaxations). By exploiting the fact that $u>0$, we
obtain that $u^2 < l^2 \Leftrightarrow u < |l|$. For the lower relaxation
of the LeakyReLU function, it follows that to minimise the relaxation
error, we set
\begin{equation}
f_\text{lower}(x) = \begin{dcases}
\alpha x \quad &\text{if} \: u < |l|,\\
x \quad &\text{else}
\end{dcases}
\end{equation}
\end{proof}
This setting can either be frozen for the verification procedure or it can
be further optimised using gradient descent which we implement following
the procedure that Xu et al.~\cite{Xu+21} describe for ReLU activations.
\paragraph{Naive vs. Optimal Relaxation}
To illustrate the benefit of the optimal slope $\tilde{\alpha}$, we compute
the total overapproximation area for a concrete example.
\begin{example}
\label{ex:optimal_leaky_relu_relaxation}
The total
overapproximation area $e(\tilde{\alpha})$ is the area enclosed between the
upper bound $f_{\text{upper}}(x)$ and the lower bound $f_{\text{lower}}(x)
= \tilde{\alpha} x$. This area can be decomposed into two parts:
\begin{enumerate}
    \item $e_{\text{upper}}$: The area between the function $\text{LeakyReLU}(x)$ and the upper bound $f_{\text{upper}}(x)$.
    \item $e_{\text{lower}}(\tilde{\alpha})$: The area between the function $\text{LeakyReLU}(x)$ and the lower bound $f_{\text{lower}}(x)$.
\end{enumerate}
The total area is $e(\tilde{\alpha}) = e_{\text{upper}} +
e_{\text{lower}}(\tilde{\alpha})$. From Equation~\eqref{eq:total_relaxation_error}, $e_{\text{lower}}(\tilde{\alpha}) =
\frac{1}{2} l^2 (\tilde{\alpha} - \alpha) + \frac{1}{2} u^2 (1 -
\tilde{\alpha})$. The upper area $e_{\text{upper}}$ is the area of the
triangle formed by the vertices $(l, \alpha l)$, $(u, u)$, and $(0, 0)$,
which is given by $e_{\text{upper}} = -\frac{1}{2} lu (1-\alpha)$.

Consider a LeakyReLU neuron with $\alpha = 0.1$ and input bounds $x \in
[-2, 5]$. Here, $l=-2$ and $u=5$. Since $u=5 > |l|=2$, Theorem
\ref{th:optimal_leaky_relu_relaxation} states that the optimal relaxation
is achieved by setting $\tilde{\alpha}=1$ (the largest valid slope). The
``naive'' relaxation, as used by Mellouki et al.~\cite{MelloukiIbnKhedherElYacoubi23},
sets $\tilde{\alpha}=\alpha=0.1$.

We first calculate the constant upper approximation area
$e_{\text{upper}}$:
\begin{equation}
    e_{\text{upper}} = -\frac{1}{2} \cdot (-2) \cdot 5 \cdot (1-0.1) = 5 \cdot 0.9 = 4.5.
\end{equation}
Next, we calculate the lower approximation area $e_{\text{lower}}(\tilde{\alpha})$ as a function of $\tilde{\alpha}$:
\begin{align}
    e_{\text{lower}}(\tilde{\alpha}) &= \frac{1}{2} \cdot (-2)^2 \cdot (\tilde{\alpha} - 0.1) + \frac{1}{2} \cdot 5^2 \cdot (1 - \tilde{\alpha}) \\
    &= \frac{1}{2} \cdot 4 (\tilde{\alpha} - 0.1) + \frac{1}{2} \cdot 25 \cdot (1 - \tilde{\alpha}) \\
    &= 2 \cdot (\tilde{\alpha} - 0.1) + 12.5 \cdot (1 - \tilde{\alpha}) \\
    &= 2 \cdot \tilde{\alpha} - 0.2 + 12.5 - 12.5 \cdot \tilde{\alpha} = 12.3 - 10.5 \cdot \tilde{\alpha}.
\end{align}
The total overapproximation area is $e(\tilde{\alpha}) = e_{\text{upper}} + e_{\text{lower}}(\tilde{\alpha}) = 4.5 + 12.3 - 10.5\tilde{\alpha} = 16.8 - 10.5\tilde{\alpha}$.

Now we compare the two cases:
\begin{itemize}
    \item \textbf{Naive Relaxation ($\tilde{\alpha}=0.1$):} The total area is $e(0.1) = 16.8 - 10.5 \cdot 0.1 = 16.8 - 1.05 = \mathbf{15.75}$.
    \item \textbf{Optimal Relaxation ($\tilde{\alpha}=1$):} The total area is $e(1) = 16.8 - 10.5 \cdot 1 = 16.8 - 10.5 = \mathbf{6.3}$.
\end{itemize}
Choosing the optimal lower bound slope therefore reduces the total overapproximation area from $15.75$ to $6.3$, a reduction of $60\%$. This demonstrates the significant tightening achieved by selecting the slope $\tilde{\alpha}$ to minimise the relaxation area, which can lead to faster verification times as the bounds are propagated through a network.
\end{example}

\section{Scope, Applicability to Other Detectors, and Multi-Object Extension}
\label{sec:scope_and_extensions}
This section expands on the scope of \ioucert{} and discusses its applicability to
detector families beyond those evaluated in the main paper. It also presents a sketch of
how it could be extended from the single-object setting to full
multi-object detection, together with the associated obstacles.

\subsection{Applicability to Other Detector Families}
\label{ssec:applicability_other_detectors}
The coordinate transformation described in Section~\ref{ssec:coordinate_transformation} which is at the core of \ioucert{} and the optimal IoU bounds of
Section~\ref{ssec:optimal_iou_ibp_bounds} apply to any detector whose
box-decoding map $\psi \circ \phi$ is injective with a tractable inverse, that
is, whenever the decoding function is strictly monotonic in each predicted
offset (Appendix~\ref{sec:h_phi_functions}). This condition is satisfied by
the dense anchor-based heads of the SSD and YOLO families we evaluate, and
is independent of how the detector is trained. In particular, training-time
advances such as adaptive sample selection and alternative label-assignment
strategies (\eg ATSS~\cite{Zhang+20b}, PAA~\cite{KimLee20},
OTA~\cite{Ge+21}) modify the training target rather than the inference-time
decoding map. They are hence orthogonal to the verification problem we
address. \ioucert{} applies unchanged to dense anchor-based detectors
trained with such schemes.

For \textbf{two-stage detectors} such as Faster R-CNN, \ioucert{} applies
directly to the anchor-based localisation performed by the region proposal
network (RPN). Full end-to-end verification additionally requires reasoning
about proposal selection, RoIAlign/RoIPool, the second-stage heads, and
top-$k$/NMS filtering, which we leave for future work. For
\textbf{anchor-free detectors}, many heads still regress offsets or
distances from spatial locations, so the same principle applies whenever
the decoding map can be inverted or bounded. \textbf{Transformer-based
detectors} such as DETR introduce attention and set-prediction mechanisms
that are orthogonal to our IoU-bound contribution and remain challenging for
current verifiers in general.

\subsection{Extension to Multi-Object Detection}
\label{ssec:multi_object_extension}
In the single-object setting studied in the main paper, correctness depends
only on the bounds of the IoU between the object prediction and the ground
truth $\mathrm{IoU}(B_i, g)$. Extending \ioucert{} to multiple objects
additionally requires bounding the pairwise box-box overlaps used by NMS,
\ie deriving $\underline{J}_{ik} \le \mathrm{IoU}(B_i, B_k) \le
\overline{J}_{ik}$ for candidate pairs $i,k$. Given such bounds, the greedy
NMS loop can be certified by induction: if $\overline{J}_{ik} \le
\tau_{\mathrm{nms}}$ the boxes never suppress one another, whereas if
$\underline{J}_{ik} > \tau_{\mathrm{nms}}$ and the score ordering is
certified, the suppression is fixed. Once the score-threshold, score-order,
and pairwise-overlap decisions are all fixed, the NMS output is determined.

This naive extension is sound but expensive. It requires up to $O(n^2)$
pairwise certificates, each involving \emph{two} variable boxes rather than
one variable box and a fixed ground truth, which is substantially harder to
bound tightly. Moreover, whenever
$\underline{J}_{ik} \le \tau_{\mathrm{nms}} < \overline{J}_{ik}$ the NMS
decision is genuinely ambiguous, and the verifier must either branch or
return \textit{UNKNOWN}. Making this practical therefore requires
NMS-specific abstractions, clustering and dominance reasoning over candidate
boxes as well as dedicated branching strategies. We view NMS-aware verification
over the candidate boxes already tightly bounded by \ioucert{} as the most
promising first component to develop, and consider a full treatment
substantial enough for follow-up work.
\end{document}